\documentclass{article} 

\usepackage[dvipsnames, table]{xcolor}

\usepackage{iclr2024_conference}

\usepackage{times}

\usepackage{hyperref}
\usepackage{url}
\usepackage{booktabs}       
\usepackage{amsfonts}       
\usepackage{nicefrac}       
\usepackage{slashbox}
\usepackage{subcaption}
\usepackage{graphicx}

\usepackage{wrapfig}
\newcommand{\E}{\mathbb{E}}
\newcommand{\R}{\mathbb{R}}

\newcommand{\indep}{\perp \!\!\! \perp}

\usepackage{xspace}

\usepackage{enumitem}
\usepackage{bbm}
\usepackage{setspace}
\newcommand{\model}{\textsf{FairPol}\xspace}
\newcommand*\diff{\mathop{}\!\mathrm{d}}
\usepackage{amsmath}
\DeclareMathOperator*{\argmax}{arg\,max}
\DeclareMathOperator*{\argmin}{arg\,min}
\usepackage{pifont}
\newcolumntype{P}[1]{>{\centering\arraybackslash}p{#1}}
\usepackage{amssymb,amsmath,amsthm}
\usepackage{algpseudocode}
\usepackage{algorithm}
\theoremstyle{definition}
\newtheorem{assumption}{Assumption}
\newtheorem{definition}{Definition}
\theoremstyle{plain}
\newtheorem{lemma}{Lemma}
\newtheorem{theorem}{Theorem}

\title{Fair Off-Policy Learning from Observational Data}

\author{Dennis Frauen, Valentyn Melnychuk \& Stefan Feuerriegel \\
LMU Munich\\
Munich Center for Machine Learning\\
\texttt{\{frauen,melnychuk,feuerriegel\}@lmu.de} \\
}

\iclrfinalcopy 
\begin{document}

\maketitle

\begin{abstract}
Algorithmic decision-making in practice must be fair for legal, ethical, and societal reasons. To achieve this, prior research has contributed various approaches that ensure fairness in machine learning predictions, while comparatively little effort has focused on fairness in decision-making, specifically off-policy learning. In this paper, we propose a novel framework for \emph{fair off-policy learning}: we learn decision rules from observational data under different notions of fairness, where we explicitly assume that observational data were collected under a different -- potentially discriminatory -- behavioral policy. For this, we first formalize different fairness notions for off-policy learning. We then propose a neural network-based framework to learn optimal policies under different fairness notions. We further provide theoretical guarantees in the form of generalization bounds for the finite-sample version of our framework. We demonstrate the effectiveness of our framework through extensive numerical experiments using both simulated and real-world data. Altogether, our work enables algorithmic decision-making in a wide array of practical applications where fairness must be ensured.  
\end{abstract}

\section{Introduction}

Algorithmic decision-making in practice must avoid discrimination and thus be fair to meet legal, ethical, and societal demands \citep{Nkonde.2019, DeArteaga.2022, CorbettDavies.2023}. For example, in the U.S., the Fair Housing Act and Equal Credit Opportunity Act stipulate that decisions must not be subject to systematic discrimination by gender, race, or other attributes deemed as sensitive. 

However, research from different areas has provided repeated evidence that algorithmic decision-making is often \emph{not} fair. A prominent example is Amazon's tool for automatically screening job applicants that was used between 2014 and 2017 \citep{Dastin.2018}. It was later discovered that the underlying algorithm generated decisions that were subject to systematic discrimination against women and thus resulted in a ceteris paribus lower probability of women being hired. 

Ensuring fairness in off-policy learning is subject to inherent challenges. The reason is that off-policy learning is based on \emph{observational data that observational data may ingrain existing bias from historical decision-making}.\footnote{The term ``bias'' can have different meanings. Here, we use bias can refer to algorithmic bias, where algorithms discriminate against individuals from certain sensitive groups. This is in contrast to the statistical bias of estimators, e.g., due to confounded data.} Hence, one challenge is that the resulting policy must be fair despite the observational data being collected under a different -- potentially discriminatory -- behavioral policy. Furthermore, one may erroneously think that a na{\"i}ve approach to achieving fairness in algorithmic decision-making is to simply omit the sensitive attribute from the observational data. For instance, to avoid bias against women, one would prevent off-policy learning from having access to a variable that stores the gender of an individual. However, in observational data, other variables may act as proxies for gender, and, hence, the learned policy may still lead to discrimination due to the underlying data-generating process \citep{Kilbertus.2017}. Hence, a custom approach for handling sensitive attributes in off-policy learning is needed.

In this paper, we propose a novel framework for \emph{fair off-policy learning from observational data}. Specifically, we learn fair decision rules from observational data where the observational may be collected under a different -- potentially discriminatory -- behavioral policy. To the best of our knowledge, ours is the first neural approach to fair off-policy learning. 
 In our off-policy learning framework, fairness may enter at two different stages, namely with respect to both the action and the policy value. 
\begin{enumerate}[leftmargin=7.5mm]
\item \emph{Fairness with respect to the action} ensures that individuals with different sensitive attributes but otherwise equal characteristics receive the same decision. In other words, the choice of action is independent of the sensitive attribute. For example, in credit lending, this means that a woman and a man, each with the same academic subject, have the same chance that their student loan is approved. We later refer to this notion as ``action fairness''. 
\item \emph{Fairness with respect to the policy value} allows us to express fairness in the way that we consider the utility (i.e., the policy value) for each sensitive group. Hence, individuals with different sensitive attributes achieve, on average, a similar utility. For example, this may allow governments to account for the fact that some sub-populations have been historically underrepresented. Hence, as women have a lower propensity than men to pursue academic careers in subjects related to technology, governments may want to strategically incentivize women through student loans so that the long-term benefit for society is maximized. We refer to this notion as ``value fairness''. Later, we introduce two variants of value fairness that build upon envy-free fairness and max-min fairness.  
\end{enumerate}

Our \textbf{contributions}\footnote{Code is available at \url{https://anonymous.4open.science/r/FairPol-402C}.} are three-fold. (1)~We first introduce different fairness notions tailored to our setting, namely fair off-policy learning from observational data. (2)~We then propose a neural framework, called \model, to learn optimal policies under these fairness notions. Specifically, we leverage fair representation learning 
 in combination with custom training objectives so that the resulting policies satisfy our fairness notions.  (3)~We provide theoretical learning guarantees in the form of generalizations bounds for \model. We also evaluate the effectiveness of our framework through extensive numerical experiments using both simulated and real-world data.

\section{Related work}
\label{sec:related_work}

We provide an overview on related work on off-policy learning from observational data, both in the standard machine learning and algorithmic fairness literature. For further background on algorithmic fairness and fairness in utility-based decision models (e.g., reinforcement learning), we refer to Appendix~\ref{app:rw}.

\textbf{Off-policy learning:}
Off-policy learning typically aims to determine optimal policies from observational data by maximizing the so-called {policy value} \citep[e.g.,][]{Kallus.2018, Athey.2021}. The policy value is a causal quantity, which can be identified from observational data under certain assumptions (see Section~\ref{sec:problem_setting}). There are three standard methods for estimating the policy value: (1)~The direct method (DM) \citep{Qian.2011, Bennett.2020}; (2)~The inverse propensity score weighted (IPW) method \citep{Kallus.2018}; and (3)~The doubly robust (DR) \citep{Athey.2021, Chernozhukov.2022}. Several works propose extensions of the three standard methods for specific settings, such as unobserved confounding \citep{Kallus.2018c, Bennett.2019b} or distribution shifts \citep{Hatt.2022b, Kallus.2022}, or overlap violations \citep{Kallus.2021b}. Different from our work, none of the above works deals with algorithmic fairness in off-policy learning.

\textbf{Algorithmic fairness for off-policy learning from observational data:}
Recently, some works aimed at learning fair decision rules from observational data. However, existing works have crucial differences from our paper as they make strong assumptions or have other critical restrictions. To this end, our paper is thus the first to integrate algorithmic fairness in off-policy learning for general policies.   

One literature stream rests on the assumption that the structural causal graph of the decision problem is known and then seeks to block specific causal pathways that are deemed unfair \citep{Nabi.2019,Nilforoshan.2022}. However, approaches from this literature stream require \emph{exact knowledge} of the causal structure of the decision problem. That is, the underlying structural causal model of the data-generating process must be a~prior known. In contrast to that, we do \emph{not} make such strong assumption (i.e., exact knowledge of the underlying causal structure) as this is rarely the case in practice.

Closest to our framework is the work by \citet{Viviano.2023}. Therein, the authors study fair off-policy learning but only for {Pareto-optimal} policies but \emph{not} for general policies. As such, the work has two crucial differences from ours. (1)~\citet{Viviano.2023} are restricted to \emph{Pareto-optimal} policies, whereas we study \emph{general} policies. Because of this, both have different learning objectives, which is also why \citet{Viviano.2023} is \emph{not} applicable as a baseline in our work. On the contrary, our decision problem requires both tailored fairness notions and a tailored learning algorithm. (2)~\citet{Viviano.2023} are limited to \emph{linear} policies, whereas we later use a \emph{neural} approach to learn complex, non-linear policies. 

\textbf{Research gap:} In sum, there is -- to the best of our knowledge -- no neural framework for fair off-policy learning of general policies. Hence, there are also no baselines that are applicable later, as there are no previous works that consider our fairness notions for off-policy learning.

\section{Problem setting}\label{sec:problem_setting}

We build upon the standard setting for policy learning from observational data \citep[e.g.,][]{Kallus.2018,Athey.2021}. We consider observational data $(X_i, S_i, A_i, Y_i)_{i=1}^n$ sampled i.i.d. from a data-generating process $(X, S, A, Y) \sim \mathbb{P}$, which consists of covariates $X \in \mathcal{X} \subseteq \R^p$, sensitive attributes $S \in \mathcal{S}$, an action $A \in \{0,1\}$, and an outcome $Y \in \R$.\footnote{In the literature on causal machine learning, actions are oftentimes also called treatments \citep[e.g.,][]{Curth.2021}. Throughout our manuscript, we prefer the term ``action'' as it directly relates to the decision-making literature.} For example, in credit lending, one could model the credit score of an applicant by $X$, the gender or age as a sensitive attribute $S$, a decision $A$ whether to approve or reject the loan, and a profit $Y$ for the lending institution. The causal graph from our setting is shown in Fig.~\ref{fig:causal_graph}. Note that modeling the action $A$ as a binary variable is consistent with previous literature \citep[e.g.,][]{Kallus.2018, Kallus.2018c, Athey.2021, Hatt.2022b} and is common for decision-making in a wide range of practical applications such as, e.g., automated hiring, credit lending, and ad targeting \citep[e.g.,][]{Smith.2023, Yoganarasimhan.2022, Kozodoi.2022}.

\begin{wrapfigure}{r}{0.15\textwidth}
  \centering
\vspace{-0.5cm}
\includegraphics[width=1\linewidth]{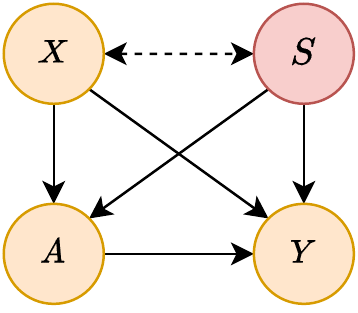}
\caption{Causal graph. We allow for arbitrary dependence between $X$, $S$.}
\label{fig:causal_graph}
\end{wrapfigure}

We make use of the Neyman-Rubin potential outcomes framework \citep{Rubin.1978} and denote $Y(a)$ as the potential outcome, which would have been observed if the action had been set to $A = a$. Formally, a policy is a measurable function $\pi \colon \mathcal{X} \times \mathcal{S} \to [0,1]$, which maps an individual with covariates $(X, S)$ onto a probability of receiving an action. The policy value of $\pi$ is then defined as
\begin{equation}
V(\pi) = \E[Y^\pi] = \E[\pi(X, S) \, Y(1) + (1 - \pi(X, S)) \, Y(0)].
\end{equation}
Note that we cannot directly estimate the policy value because, for each observation, only one of the potential outcomes is observed in the observational data. This is known as the fundamental problem of causal inference \citep{Pearl.2009}. However, we can impose the following standard assumptions in order to identify the policy value $V(\pi)$ from observational data \citep{Rubin.1974}.

\begin{assumption}[Standard causal inference assumptions]\label{ass:identification}
\label{ass:iv}
We assume: (i)~\emph{consistency:} $Y(A) = Y$; (ii)~\emph{positivity:} $0 < \mathbb{P}(A=1 \mid X = x) < 1$ for all $x \in \mathcal{X}$; and (iii)~\emph{strong ignorability:} $Y(0), Y(1) \indep A \mid X$.
\end{assumption}

\noindent
Under Assumption~\ref{ass:identification}, the policy value is identified by $V(\pi) = \E_\mathcal{D}[\psi^\mathrm{m}(\pi, \mathcal{D})]$, with observational data $\mathcal{D} = (X, S, A, Y)$ and where $\psi^\mathrm{m}(\pi, \mathcal{D})$ is one of the following three policy scores:
\begin{align}
\psi^{\mathrm{DM}}(\pi, \mathcal{D}) &= \pi(X, S) \, \mu_1(X, S) + (1 - \pi(X, S)) \, \mu_0(X, S) , \label{eq:pscore}\\
\psi^{\mathrm{IPW}}(\pi, \mathcal{D}) &= \frac{A \, \pi(X, S) + (1 - A) \, (1 - \pi(X, S))}{A \, \pi_b(X, S) + (1 - A) \, (1 - \pi_b(X, S))}Y , \qquad\qquad \text{ and } \label{eq:pscore_ipw}\\
\psi^{\mathrm{DR}}(\pi, \mathcal{D}) &= \psi^{\mathrm{DM}}(\pi, \mathcal{D}) + \frac{A \, \pi(X, S) + (1 - A) \, (1 - \pi(X, S)) }{A \, \pi_b(X, S) + (1 - A) \, (1 - \pi_b(X, S))} \left(Y - \mu_A(X, S)\right),\label{eq:pscore_dr}
\end{align}
which refer to the direct method (DM), the inverse propensity score weighted (IPW) method, and the doubly robust (DR) method, and where $\mu_j(X, S) = \E[Y \mid X, S, A = j], j \in \{0,1\}$, are the outcome regression functions and where $\pi_b(X, S) = \mathbb{P}(A = 1 \mid X, S)$ is the so-called {propensity score} (i.e., behavioral policy).

\textbf{Task:} In standard off-policy learning, the objective is to find a policy from observational data that maximizes the policy value via 
\begin{equation}\label{eq:unfair}
    \pi^{{\mathrm{uf}}} \in \argmax_{\pi \in \Pi} V(\pi),
\end{equation}
where $\Pi$ is some predefined class of policies. For example, $\Pi$ may contain all policies parameterized by some neural network. Any policy that satisfies Eq.~\eqref{eq:unfair} is an \emph{optimal unrestricted policy}, as it does not give any special considerations to the sensitive covariates $S$ when maximizing the policy value. In special cases, the optimal unrestricted policy may coincide with a policy that satisfies the desired fairness notion but, in practice, it will generally not. In many situations, the optimal unrestricted policy will lead to discrimination because of which fairness must be explicitly enforced. 

\section{Fairness notions for off-policy learning}

We now introduce different fairness notions that are tailored to off-policy learning. Specifically, fairness may enter off-policy learning at two different stages, namely with respect to (1)~the action and (2)~the policy value. We refer to them as (1)~\emph{action fairness} and (2)~\emph{value fairness}, respectively. The former, action fairness, prohibits discrimination with respect to the selected action, while the latter, value fairness, prohibits discrimination with respect to the expected utility (i.e., the policy value). Both have inherent advantages in practice, and the actual choice depends on the underlying business goals as well as legal and ethical contexts. 

{\tiny$\blacksquare$}\,\textbf{Action fairness:} The objective in action fairness is that individuals with different sensitive attributes but otherwise equal characteristics receive the same action. For example, in credit lending, this implies that a policy $\pi$ should assign the same action $A = A'$ to two individuals with the same covariates $X = X'$ but different gender $S \neq S'$. In other words, the choice of actions should be independent of the sensitive attribute. We formalize this in the following definition. 
\begin{definition}[Action fairness]
\emph{A policy $\pi^{\mathrm{af}} \in \Pi$ fulfills \emph{action fairness} if it is not a function of $S$ and $\pi^{\mathrm{af}}(X) \indep S$, that is, the recommended action should be independent of the sensitive attribute. A policy $\pi^{\mathrm{af}}$ that fulfills action fairness is optimal if it satisfies $\pi^{\mathrm{af}} \in \argmax_{\pi \in \Pi_{\mathrm{af}}} V(\pi)$, 
where $\Pi_{\mathrm{af}} = \{\pi \in \Pi \,|\, \pi \textrm{ fulfills action fairness}\}$.}
\end{definition}

Action fairness is the equivalent of \emph{demographic parity} for decision-making \citep{Hardt.2016}. It ensures that individuals who only differ with respect to their sensitive attributes (and covariates correlated to them) receive the same decision. 
As such, action fairness is relevant in many applications such as hiring or credit lending where legal frameworks mandate that decisions may not discriminate against certain sensitive attributes \citep{Barocas.2016, Kleinberg.2019}.

{\tiny$\blacksquare$}\,\textbf{Value fairness:} The rationale behind {value fairness} is that different sub-populations defined by the sensitive attribute may benefit differently from a policy. Hence, we now express fairness with respect to the policy value and thus ensure that individuals with different sensitive attributes achieve, on average, a similar utility. Generally, value fairness may be desired in practice where social welfare shall be considered. Examples can be governments that award scholarships, job trainings, or free health insurance with the aim to strategically promote historically underrepresented groups but where the underrepresented groups benefit from such treatments at an overproportionate rate. 

To formalize value fairness, let us denote the conditional policy value $V_s(\pi) = \E[\psi^\mathrm{m}(\pi, \mathcal{D}) \mid S = s]$, where we condition on the sensitive attribute $S = s$. In the following, we introduce two variants of value fairness with different aims: (1)~{envy-free fairness} and (2)~{max-min fairness}. The former, envy-free fairness, ensures that the conditional policy values $V_s(\pi)$, $s \in \{ 0, 1 \}$, do not differ more than some predefined level $\alpha$ between the sub-populations. The latter, max-min fairness, ensures that the worst-case conditional policy value across sub-populations is being maximized.

\vspace{0.2cm}
\begin{definition}[Envy-free fairness]
\emph{A policy $\pi \in \Pi$ fulfills \emph{envy-free fairness} with level $\alpha \geq 0$ if $|V_s(\pi) - V_{s^\prime}(\pi)| \leq \alpha \textrm{ for all } s, s^\prime \in S$. We denote the set of envy-free policies by $\Pi(\alpha) = \{\pi \in \Pi \,|\, \pi \textrm{ is envy free with level } \alpha \}$. An envy-free policy $\pi^\alpha$ is optimal if
$\pi^\alpha \in \argmax_{\pi \in \Pi(\alpha)} V(\pi)$.}
\end{definition} 

\begin{definition}[Max-min fairness]
\emph{A policy $\pi^{\mathrm{mm}} \in \Pi$ fulfills \emph{max-min fairness} if it minimizes the worst-case policy value for the sensitive attributes, that is, $\pi^{\mathrm{mm}} \in \argmax_{\pi \in \Pi} \inf_{s \in S} V_s(\pi)$.}
\end{definition}

The above definitions of value fairness are inspired by previous literature on resource allocation \citep[e.g.,][]{Arnsperger.1994, Bertsimas.2011}, and we here adopt them here to off-policy learning, that is, learning from observational data. Envy-free fairness allows decision-makers to control for disparities in the utility between the sensitive groups by fixing $\alpha$. Max-min fairness seeks the best possible worst-case policy value.

{\tiny$\blacksquare$}\,\textbf{Combining action fairness and value fairness:} Both action fairness and value fairness can be combined in off-policy learning so that the obtained policies fulfill both notions simultaneously. To this end, one simply replaces the policy class $\Pi$ with $\Pi_{\mathrm{af}}$. This thus restricts the policy class to all policies that fulfill action fairness, and, as a result, one obtains policies that fulfill both notions. 

Combining action fairness and value fairness has also theoretical implications, which we discuss in the following. In fact, it turns out that the notion of max-min fairness only yields a useful fairness notion when it is used in combination with action fairness. We show this in the following Lemma~\ref{lem:maxmin_unfair}.

\begin{lemma}\label{lem:maxmin_unfair}
Let $\Pi$ the set of all measurable policies $\pi \colon \mathcal{X} \times \mathcal{S} \to [0,1]$. Then, there exists a policy that fulfills max-min fairness, i.e., $\pi^\ast \in \argmax_{\pi \in \Pi} \inf_{s \in S} V_s(\pi)$ which is also an optimal unrestricted policy (i.e., a solution to Eq.~(\ref{eq:unfair})).
\end{lemma}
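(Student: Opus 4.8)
The plan is to exhibit a single policy that pointwise maximizes the integrand defining the policy value; because the class $\Pi$ is completely unrestricted, the action at each point $(x,s)$ may be chosen independently, so no trade-off between the sensitive groups ever arises. Concretely, I would work with the direct-method representation, which under Assumption~\ref{ass:identification} gives $V(\pi) = \E[\pi(X,S)\,\mu_1(X,S) + (1-\pi(X,S))\,\mu_0(X,S)]$ and, conditioning on $S=s$, $V_s(\pi) = \E[\pi(X,S)\,\mu_1(X,S) + (1-\pi(X,S))\,\mu_0(X,S)\mid S=s]$. The integrand is affine in $\pi(X,S)\in[0,1]$, so for fixed $(x,s)$ it is maximized by placing all mass on the larger of $\mu_1(x,s)$ and $\mu_0(x,s)$.

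First I would define the candidate policy $\pi^\ast(x,s) = \mathbbm{1}\{\mu_1(x,s) \ge \mu_0(x,s)\}$, noting that it is measurable (the $\mu_j$ are conditional expectations, hence measurable) and therefore lies in $\Pi$. The key observation is that $\pi^\ast$ maximizes the integrand at every single point $(x,s)$, and this pointwise dominance survives integration against any measure. Integrating against the full law of $(X,S)$ yields $V(\pi)\le V(\pi^\ast)$ for all $\pi\in\Pi$, so $\pi^\ast$ solves Eq.~\eqref{eq:unfair}; integrating against the conditional law of $X$ given $S=s$ yields $V_s(\pi)\le V_s(\pi^\ast)$ for every $s$ and every $\pi$. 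Thus $\pi^\ast$ maximizes each conditional value simultaneously.

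It then remains to convert simultaneous maximization of the $V_s$ into maximization of their infimum. For any $\pi\in\Pi$ and any fixed $s_0$ we have $\inf_{s} V_s(\pi)\le V_{s_0}(\pi)\le V_{s_0}(\pi^\ast)$; since this holds for every $s_0$, taking the infimum over $s_0$ on the right gives $\inf_{s} V_s(\pi)\le \inf_{s} V_s(\pi^\ast)$, that is, $\pi^\ast\in\argmax_{\pi\in\Pi}\inf_{s} V_s(\pi)$. Combined with the previous paragraph, the same $\pi^\ast$ is both max-min fair and an optimal unrestricted policy, which is the claim.

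The argument is short, so the subtleties rather than the computations are where care is needed. The main point to get right is the measure-theoretic step that a pointwise maximizer of an integrand maximizes the (conditional) expectation; this is precisely where the unrestricted nature of $\Pi$ is essential, since a constrained class (e.g.\ one enforcing action fairness, where $\pi$ may not depend on $S$) would couple the groups and break the simultaneous optimization. I would also note that the choice of policy score is immaterial: under Assumption~\ref{ass:identification} all three scores share the same conditional expectation given $(X,S)$, so the direct-method representation may be used without loss of generality, and the tie-breaking rule in the indicator is irrelevant since it leaves every value unchanged.
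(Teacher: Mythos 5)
Your proposal is correct and follows essentially the same route as the paper: both arguments exhibit a single policy that maximizes every conditional value $V_s$ simultaneously (possible only because $\Pi$ is unrestricted in $(x,s)$), and then conclude max-min fairness by taking infima and unrestricted optimality by integrating over $S$. The only difference is that the paper posits an abstract per-$s$ maximizer $\pi^\ast(\cdot,s)\in\argmax_{\pi\in\Pi_{\mathcal{X}}}V_s(\pi)$ without justifying that the supremum is attained, whereas your explicit threshold policy $\pi^\ast(x,s)=\mathbbm{1}\{\mu_1(x,s)\ge\mu_0(x,s)\}$ supplies that attainment concretely — a small but genuine tightening of the paper's argument.
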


We now turn to the relationship between envy-free fairness and max-min fairness when combined with action fairness. As it turns out, under action fairness and some further conditions, max-min fairness can be seen as a special case of envy-free fairness with $\alpha = 0$. This is stated in Lemma~\ref{lem:envy_maxmin}. 

\vspace{0.2cm}
\begin{lemma}\label{lem:envy_maxmin}
Let $\mathrm{ITE}(x, s^\ast) = \mu_1(x, s^\ast) - \mu_0(x, s^\ast)$ denote the individual treatment effect for an individual with covariates $(x, s^\ast)$. We further assume that $\mathcal{S} = \{0, 1\}$ is binary, and let $(\pi^{\mathrm{mm}}, s^\ast)$ be a solution to $\max_{\pi \in \Pi} \inf_{s \in S} V_s(\pi)$, and let $\pi^{\mathrm{mm}}(x)$ fulfill action fairness. Furthermore, we assume that there exists a set of covariates $V \subseteq \mathcal{X}$ with $\mathbb{P}(X \in V \mid S = s) > 0$ such that: either $\mathrm{ITE}(x, s^\ast) > 0$ and $\pi^{\mathrm{mm}}(x) < 1$; or $\mathrm{ITE}(x, s^\ast) < 0$ and $\pi^{\mathrm{mm}}(x) > 0$ for all $x \in V$, $s \in \mathcal{S}$. Then, $\pi^{\mathrm{mm}}$ fulfills envy-free fairness with $\alpha=0$. Further, all optimal policies that both satisfy action fairness and envy-free fairness with $\alpha = 0$ also fulfill max-min fairness.
\end{lemma}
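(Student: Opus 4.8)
\emph{Proof plan.} The plan is to first rewrite the conditional policy value in a form that isolates the dependence on $\pi$, then establish the first claim by a local perturbation/contradiction argument, and finally deduce the second claim from a convexity identity relating $V(\pi)$ to the $V_s(\pi)$. For the reduction, I would note that since the policy value is identified, a short conditional-expectation calculation under Assumption~\ref{ass:identification} gives $\E[\psi^{\mathrm{IPW}}(\pi,\mathcal{D}) \mid X, S] = \E[\psi^{\mathrm{DR}}(\pi,\mathcal{D}) \mid X, S] = \psi^{\mathrm{DM}}(\pi,\mathcal{D})$, so that $V_s(\pi) = \E[\psi^{\mathrm{DM}}(\pi,\mathcal{D}) \mid S = s]$ for every score. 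Under action fairness ($\pi(X,S) = \pi(X)$) this yields the convenient representation $V_s(\pi) = \E[\mu_0(X,s) + \pi(X)\,\mathrm{ITE}(X,s) \mid S = s]$, which is affine in $\pi$ and makes the direction of improvement transparent.

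For the first claim I would argue by contradiction. Write $s'$ for the element of $\{0,1\}$ other than $s^\ast$; since $(\pi^{\mathrm{mm}}, s^\ast)$ solves the max-min problem, $s^\ast$ attains the infimum at $\pi^{\mathrm{mm}}$, so $V_{s^\ast}(\pi^{\mathrm{mm}}) \le V_{s'}(\pi^{\mathrm{mm}})$ and it suffices to rule out a strict gap $g = V_{s'}(\pi^{\mathrm{mm}}) - V_{s^\ast}(\pi^{\mathrm{mm}}) > 0$. I would then perturb $\pi^{\mathrm{mm}}$ only on $V$, moving it in the direction that raises the integrand $\pi(x)\,\mathrm{ITE}(x,s^\ast)$: adding a small $\epsilon$ where $\mathrm{ITE}(\cdot,s^\ast) > 0$ and subtracting $\epsilon$ where $\mathrm{ITE}(\cdot,s^\ast) < 0$ (splitting $V$ into these two pieces if necessary). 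Because the hypotheses guarantee $\pi^{\mathrm{mm}}$ has room to move on $V$ and $\mathbb{P}(X \in V \mid S = s^\ast) > 0$, the resulting $\tilde\pi$ satisfies $V_{s^\ast}(\tilde\pi) > V_{s^\ast}(\pi^{\mathrm{mm}})$ strictly, while $|V_{s'}(\tilde\pi) - V_{s'}(\pi^{\mathrm{mm}})| \le \epsilon\,\E[|\mathrm{ITE}(X,s')| \mid S = s']$ is of order $\epsilon$. Choosing $\epsilon$ small enough that this change stays below $g$, both conditional values exceed $V_{s^\ast}(\pi^{\mathrm{mm}}) = \inf_{s} V_s(\pi^{\mathrm{mm}})$, so $\inf_s V_s(\tilde\pi) > \inf_s V_s(\pi^{\mathrm{mm}})$. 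Since $\tilde\pi \in \Pi$, this contradicts $\pi^{\mathrm{mm}} \in \argmax_{\pi \in \Pi} \inf_s V_s(\pi)$; hence $V_{s^\ast}(\pi^{\mathrm{mm}}) = V_{s'}(\pi^{\mathrm{mm}})$, i.e. envy-freeness with $\alpha = 0$.

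The step I expect to be the most delicate is making this perturbation rigorous: the hypothesis only gives $\pi^{\mathrm{mm}}(x) < 1$ (resp. $> 0$) pointwise on $V$, not uniformly, so a fixed $\epsilon$ need not keep $\tilde\pi$ in $[0,1]$. I would handle this by passing to a sublevel set $V_\delta = \{x \in V : \pi^{\mathrm{mm}}(x) \le 1-\delta\}$ (resp. $\{x \in V : \pi^{\mathrm{mm}}(x) \ge \delta\}$), which still has positive conditional probability for small $\delta$ by continuity of measure, and perturbing only there with $\epsilon \le \delta$; integrability of $\mathrm{ITE}$ (e.g. bounded outcomes) is what controls the change in $V_{s'}$. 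Note that $\tilde\pi$ need not itself be action fair, which is irrelevant here since the max-min optimum is taken over all of $\Pi$.

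For the second claim I would use that $V(\pi) = \sum_{s} \mathbb{P}(S = s)\,V_s(\pi)$ is a convex combination of $V_0(\pi)$ and $V_1(\pi)$, so $\inf_s V_s(\pi) \le V(\pi)$ for every action-fair $\pi$, with equality exactly when $V_0(\pi) = V_1(\pi)$, i.e. under envy-freeness with $\alpha=0$. By the first claim the max-min optimizer $\pi^{\mathrm{mm}}$ is envy free, so it lies in $\Pi_{\mathrm{af}} \cap \Pi(0)$ and attains $\inf_s V_s(\pi^{\mathrm{mm}}) = V(\pi^{\mathrm{mm}}) =: M$, the optimal max-min value. Let $\pi^\star$ be any optimal policy satisfying both action fairness and envy-freeness with $\alpha = 0$, i.e. $\pi^\star \in \argmax_{\pi \in \Pi_{\mathrm{af}} \cap \Pi(0)} V(\pi)$. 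Then $V(\pi^\star) \ge V(\pi^{\mathrm{mm}}) = M$ because $\pi^{\mathrm{mm}}$ is feasible for this problem, and envy-freeness gives $\inf_s V_s(\pi^\star) = V(\pi^\star) \ge M$; since $M$ is the maximal max-min value over $\Pi$, equality must hold and $\pi^\star \in \argmax_{\pi \in \Pi} \inf_s V_s(\pi)$, which is precisely max-min fairness.
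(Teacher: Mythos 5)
Your proof is correct and follows essentially the same route as the paper: a perturbation-by-$\epsilon\,\mathrm{sign}(\mathrm{ITE}(x,s^\ast))$ contradiction argument on $V$ for the first claim, and the convex-combination identity $V(\pi)=\sum_{s}\mathbb{P}(S=s)\,V_s(\pi)$ together with feasibility of $\pi^{\mathrm{mm}}$ in $\Pi_{\mathrm{af}}\cap\Pi(0)$ for the second (the paper phrases this last step as a contradiction, but the argument is identical). In fact, your sublevel-set refinement $V_\delta$ addresses a point the paper's proof leaves implicit -- the pointwise bounds $\pi^{\mathrm{mm}}(x)<1$ (resp.\ $>0$) do not give uniform room for a fixed $\epsilon$, so one must restrict the perturbation to a positive-measure subset (or clip) to keep the perturbed policy in $[0,1]$.
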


We provide an additional discussion of the assumptions from Lemma~\ref{lem:envy_maxmin} in Appendix~\ref{app:assumptions}. Furthermore, we provide a toy example to discuss our fairness notions in Appenidx~\ref{app:toy_example}.

\section{Neural framework for off-policy learning}\label{sec:framework}

We propose our neural framework, called \model, which learns optimal action and/ or value fair policies in two steps (see Fig.~\ref{fig:model}). In \textbf{Step~1}, we ensure action fairness by restricting the underlying policy class $\Pi$ to a subset of policies $\Pi_\mathrm{af} \subseteq \Pi$ (Sec.~\ref{sec:step1}). In \textbf{Step~2}, we ensure value fairness by changing the underlying learning objective (Sec.~\ref{sec:step2}). We provide theoretical results in Sec.~\ref{sec:theory}. 

\subsection{Step 1: Fair representation learning for action fairness} 
\label{sec:step1}

\begin{wrapfigure}{r}{0.4\textwidth}
\vspace{-0.4cm}
  \centering
\includegraphics[width=1\linewidth]{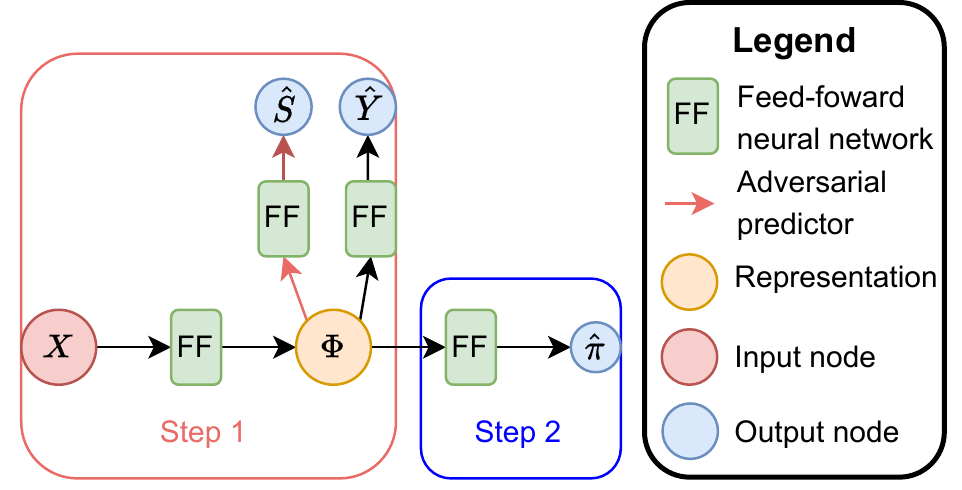}
\caption{Overview of \model which provides an instantiation of our framework with neural networks.}
\label{fig:model}
\vspace{-0.2cm}
\end{wrapfigure}
To obtain $\Pi_\mathrm{af}$, we build upon the idea of fair representation learning \citep[e.g.,][]{Zemel.2013, Madras.2018} but adapt it to our task of fair off-policy learning. We first learn a fair representation $\Phi \colon \mathcal{X} \to \R^k$ of the data so that $\Phi(X) \indep S$, but where $\Phi(X)$ is still predictive of the outcome $Y$. This ensures that any policy based on $\Phi(X)$ satisfies action fairness but is still effective in achieving a large policy value. In our implementation, we parameterize $\Phi$ by neural networks that are trained with two adversarial objectives. As a result, $\Phi$ essentially yields a policy class that is restricted to all policies with action fairness, that is, $\Pi^\Phi_\mathrm{af} = \{\pi_\theta \circ \Phi \mid \theta \in \Theta\}$.

We use three feed-forward neural networks to learn the representation $\Phi$: (1)~a base representation network $\Phi_{\theta_\Phi}$ that takes the non-sensitive attributes $X$ as input and outputs the representation; (2)~an outcome prediction network $G^Y_{\theta_Y}$ that predicts the outcome $Y$ based on the representation $\Phi$; and (3)~a sensitive attribute network $G^S_{\theta_S}$ that predicts the sensitive attribute $S$ based on the representation. Here $\theta_\Phi$, $\theta_Y$, and $\theta_S$ denote the neural network parameters. The base representation network $\Phi_{\theta_\Phi}$ serves as basis to construct the fair representation, while $G^Y_{\theta_Y}$ and $G^S_{\theta_S}$ allow us to ensure predictiveness of $Y$ and non-predictiveness of $S$.

We proceed as follows to find the optimal parameters $\hat{\theta}_\Phi$, $\hat{\theta}_Y$, and $\hat{\theta}_S$. We optimize an objective consisting of three parts: 
(1)~The outcome loss $\mathcal{L}_Y$ ensures that to our representation $\Phi$ and the outcome prediction network are predictive of the outcome $Y$. For this, we minimize 
$\mathcal{L}_Y(\theta_\Phi, \theta_Y) = \frac{1}{n} \sum_{i=1}^n \left(G^Y_{\theta_Y}\left(\Phi_{\theta_\Phi}(X_i)\right) - Y_i \right)^2$.
(2)~The sensitivity loss $\mathcal{L}_S$ learns the parameters of the sensitive attribute network, i.e., $G^S_{\theta_S}$, so that it is predictive of $S$. We thus minimize $\mathcal{L}_S(\theta_\Phi, \theta_S) = \frac{1}{n} \sum_{i=1}^n\mathrm{CE}\left(G^S_{\theta_S}\left(\Phi_{\theta_\Phi}(X_i)\right), S_i\right)$,
where $\mathrm{CE}$ denotes the categorical cross-entropy loss. 
(3)~The confusion loss  $\mathcal{L_\mathrm{conf}}$, guided by the sensitive attribute network, aims to render the representation $\Phi$ non-predictive of $S$. We thus minimize $\mathcal{L_\mathrm{conf}}(\theta_\Phi, \theta_S) = \frac{1}{n} \sum_{i=1}^n \sum_{j=1}^{|\mathcal{S}|} - \frac{1}{|\mathcal{S}|}\log \left(G^S_{\theta_S}\left[\Phi_{\theta_\Phi}(X_i)\right]^j\right)$,
where $[\cdot]^j$ is the $j$-th element of a vector. 

Both the sensitivity loss and the confusion loss are adversarial to each other. This is crucial for the following reasons: the sensitive attribute network $G^S_{\theta_S}$ is trained to correctly classify the sensitive attribute by minimizing $\mathcal{L}_S(\theta_\Phi, \theta_S)$ with respect to $\theta_S$, while the base representation network $\Phi_{\theta_\Phi}$ tries to ``confuse'' the sensitive attribute network by minimizing $\mathcal{L_\mathrm{conf}}(\theta_\Phi, \theta_S)$ with respect to $\theta_\Phi$, i.e., forcing the sensitive attribute network to predict a uniform distribution of the sensitive attribute. This ensures that the learned representation becomes non-predictive of the sensitive attribute $S$. Taken together, the overall objective is
\begin{equation}\label{eq:loss_overall}
    \hat{\theta}_\Phi, \hat{\theta}_Y = \argmin_{\theta_\Phi, \theta_Y}  \mathcal{L}_Y(\theta_\Phi, \theta_Y) + \gamma \mathcal{L_\mathrm{conf}}(\theta_\Phi, \hat{\theta}_S); \qquad \qquad \hat{\theta}_S = \argmin_{\theta_S}\gamma \mathcal{L}_S(\hat{\theta}_\Phi, \theta_S);
\end{equation}
where $\gamma$ is a parameter that weights the different parts in the loss function. The objective in Eq.~\eqref{eq:loss_overall} is also known as counterfactual domain confusion loss \citep{Melnychuk.2022}. We later train the two adversarial objectives from Eq.~\eqref{eq:loss_overall} via an iterative gradient-based solver. For further details on our learning algorithm, we refer to Appendix~\ref{app:algorithm}.

\subsection{Step 2: Learning objectives for value fairness}
\label{sec:step2}
We now specify the learning objectives in our framework and how these vary according to the different notions of value fairness. To do so, we first propose model-agnostic objectives and then describe how we incorporate these into Step~2 of \model.

\textbf{Model-agnostic objectives}: In expectation, the policy value is defined as $V(\pi) = \E_\mathcal{D}[\psi^\mathrm{m}(\pi, \mathcal{D})]$, where $\mathrm{m} \in \{\mathrm{DM}, \mathrm{IPW}, \mathrm{DR}\}$. Further, the conditional policy value is defined as $V_s(\pi) = \E[\psi^\mathrm{m}(\pi, \mathcal{D}) \mid S = s] = \E[\psi^\mathrm{m}(\pi, \mathcal{D}) \frac{\mathbbm{1}(S = s)}{\mathbb{P}(S = s)}]$, where $\mathbbm{1}(\cdot)$ denotes the indicator function. Hence, we can estimate these quantities by replacing the expectations with finite sample averages. Then, the empirical policy value becomes
$\hat{V}^\mathrm{m}(\pi) = \frac{1}{n} \sum_{i=1}^n \psi^\mathrm{m}(\pi, \mathcal{D}_i)$, ,
and the empirical conditional policy value becomes
$\hat{V}^\mathrm{m}_s(\pi) = \frac{1}{n} \sum_{i=1}^n \frac{\mathbbm{1}(S_i = s)}{\hat{p}_n(s)} \, \psi^\mathrm{m}(\pi, \mathcal{D}_i) \quad \textrm{with} \quad \hat{p}_n(s) = \frac{\sum_{j=1}^n \mathbbm{1}(S_j = s)}{n}$. 
The optimal unconstrained policy can be obtained via
$\hat{\pi} \in \argmax_{\pi \in \Pi} \hat{V}^\mathrm{m}(\pi)$.

We now state the learning objectives for (1)~envy-free fairness and (2)~max-min fairness:
(1)~For envy-free fairness, we reformulate the optimization problem over the class of envy-free policies into an optimization problem over an unconstrained policy class. We further replace the population quantities $V(\pi)$ and $V_s(\pi)$ with their corresponding estimates $\hat{V}^\mathrm{m}(\pi)$ and $\hat{V}^\mathrm{m}_s(\pi)$. We thus yield
\begin{equation}\label{eq:envyfree_est}
    \hat{\pi}^\lambda \in \argmax_{\pi \in \Pi} \hat{V}^\mathrm{m}_\lambda(\pi) \quad \textrm{with} \quad \hat{V}^\mathrm{m}_\lambda(\pi) = \hat{V}^\mathrm{m}(\pi)  - \lambda \max_{s, s\prime \in \mathcal{S}} |\hat{V}^\mathrm{m}_s(\pi) - \hat{V}^\mathrm{m}_{s^\prime}(\pi)|,
\end{equation}
where $\lambda > 0$ is a hyperparameter controlling envy-free fairness and where larger values correspond to more fair policies.
(2)~For max-min fairness, we proceed analogously and obtain
\begin{equation}\label{eq:maxmin_est}
    \hat{\pi}^{\mathrm{mm}} \in \argmax_{\pi \in \Pi} \min_{s \in S} \hat{V}^\mathrm{m}_s(\pi).
\end{equation}

\textbf{Incorporating value fairness in \model:}
The second step of \model is to optimize the empirical policy value. Here, we optimize against the previously introduced learning objectives. Depending on whether action fairness is enforced, we optimize the learning objectives over all policies in $\Pi$ or the subset $\Pi^\Phi_\mathrm{af}$ of policies with action fairness.  Hence, once the representation $\hat{\Phi} = \Phi_{\hat{\theta}_\Phi}$ is trained, we optimize our objectives for value fairness over the policy class $\Pi^{\hat{\Phi}}_\mathrm{af} = \{\pi_\theta \circ \hat{\Phi} \mid \theta \in \Theta\}$. Here, we parametrize $\pi_\theta$ by a neural network with parameters $\theta \in \Theta$ that takes the representation $\hat{\Phi}(X)$ as input and outputs a policy recommendation $\pi_\theta(\hat{\Phi}(X)) \in [0,1]$. Formally, we thus optimize the policy via
$\max_{\theta \in \Theta} \hat{V}^\mathrm{m}(\pi_\theta), \qquad
\max_{\theta \in \Theta} \hat{V}^\mathrm{m}_\lambda(\pi_\theta), \qquad \textrm{or} \qquad \max_{\theta \in \Theta} \min_{s \in S} \hat{V}^\mathrm{m}_s(\pi_\theta)$, depending on whether there is no value fairness, envy-free fairness, or max-min fairness, respectively.\footnote{Note that the policies that fulfill no value fairness are either the optimal unrestricted policies or the policies that fulfill action fairness.} 

\textbf{Implementation details:}
In our \model implementation, we use feed-forward neural networks with dropout and exponential linear unit activation functions for the base representation network, the outcome prediction network, and the sensitive attribute network. We use Adam \citep{Kingma.2015} for the optimization in both Steps 1 and 2. We further follow best practices for hyperparameter tuning. We first split the data into a training and validation set, and we then perform hyperparameter tuning using a grid search. All evaluations are based on the test set so that we capture the out-of-sample performance on unseen data.
Additional details for our framework are in Appendix~\ref{app:hyperparam}. 

\subsection{Generalization bounds}
\label{sec:theory}

In the following, we derive generalization bounds for the finite-sample version of our framework under a standard boundedness assumption (see Appendix~\ref{app:proofs}). There, we quantify the deviation of the proposed finite-sample policy estimators from their respective population quantities. 
Note that the derivations also hold for action fairness where one would simply need to replace $\Pi$ by $\Pi_{\mathrm{af}}$. 

\vspace{0.2cm}
\begin{theorem}[Generalization bounds]\label{thrm:bounds} Let $p(s) = \mathbb{P}(S = s) \geq \nu$ for all $s \in \mathcal{S}$ and some $\nu > 0$. Let $p, p_1, p_2 > 0$ and let $K_\mathrm{m}$ denote a constant that depends on the estimation method $\mathrm{m} \in \{\mathrm{DM}, \mathrm{IPW}, \mathrm{DR}\}$. Assume that, for $\ell(n, p_2) = 1 - \nu + \sqrt{\log\left(|\mathcal{S}|/{p_2}\right)/2}$, it holds that $\frac{1}{\sqrt{n}}\ell(n, p_2) < \nu$. Then, the following three statements hold: (i) With probability at least $1 - p$ it holds that
\begin{equation}\label{eq:bound_uf}
\resizebox{.4\hsize}{!}{$V(\pi) \geq \hat{V}^\mathrm{m}(\pi) - 2 C K_\mathrm{m} \left(R_n(\Pi) + \sqrt{\frac{8 \log\left(\frac{2}{p}\right)}{n}}\right)$}
\end{equation}
for all $\pi \in \Pi$. (ii)~With probability at least $1 - p_1 - p_2$, we have
\begin{equation}\label{eq:bound_envy_free}
\resizebox{.6\hsize}{!}{$V_\lambda(\pi) \geq \hat{V}^\mathrm{m}_\lambda(\pi) - 2 C K_\mathrm{m} \frac{2+\nu}{\nu}\left( R_n(\Pi) +\sqrt{\frac{8 \log\left(\frac{4 |\mathcal{S}|}{p_1}\right)}{n}} + \frac{2}{(2+\nu) \sqrt{n}} \left( \frac{\ell(n, p_2)}{\nu - \frac{1}{\sqrt{n}}\ell(n, p_2)} \right)\right)$}
\end{equation}
for all $\pi \in \Pi$. (iii)~With probability at least $1-p_1 - p_2$ it holds that \begin{equation}\label{eq:bound_max_min}
\resizebox{.8\hsize}{!}{
    $\min_{s \in \mathcal{S}} V_s(\pi) \geq \min_{s \in \mathcal{S}} \hat{V}^\mathrm{m}_s(\pi) - \frac{2 C K_\mathrm{m}}{\nu} \left( R_n(\Pi) + \sqrt{\frac{8 \log\left(\frac{2 |\mathcal{S}|}{p_1}\right)}{n}} + \frac{1}{\sqrt{n}} \left( \frac{\ell(n, p_2)}{\nu - \frac{1}{\sqrt{n}}\ell(n, p_2)} \right)\right)$}
\end{equation}
for all $\pi \in \Pi$.
\end{theorem}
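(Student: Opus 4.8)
The plan is to prove all three bounds as one-sided uniform deviation inequalities, reducing everything to two master quantities: the uniform deviation of the unconditional policy value $\sup_{\pi \in \Pi}|\hat V^{\mathrm m}(\pi) - V(\pi)|$ and the uniform deviation of the conditional policy values $\sup_{\pi \in \Pi}\max_{s}|\hat V^{\mathrm m}_s(\pi) - V_s(\pi)|$. The boundedness assumption enters as $|\psi^{\mathrm m}(\pi, \mathcal D)| \le C K_{\mathrm m}$, which supplies the $C K_{\mathrm m}$ scaling throughout and, after dividing by $p(s) \ge \nu$, the $1/\nu$ factors in parts (ii) and (iii). For part (i) I would follow the textbook route: view $V(\pi) - \hat V^{\mathrm m}(\pi)$ as an empirical process over the score class $\{\psi^{\mathrm m}(\pi, \cdot) : \pi \in \Pi\}$, apply bounded-differences (McDiarmid) concentration to $\sup_\pi(V(\pi) - \hat V^{\mathrm m}(\pi))$ to obtain the $\sqrt{8\log(2/p)/n}$ term, and then use symmetrization together with Talagrand's contraction lemma to replace the Rademacher complexity of the score class by $C K_{\mathrm m} R_n(\Pi)$; since each $\psi^{\mathrm m}$ is affine in $\pi$, the contraction is immediate and bundles the method-dependent Lipschitz constant into $K_{\mathrm m}$.

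Part (iii) is the technical heart, because $\hat V^{\mathrm m}_s$ normalizes by the estimated frequency $\hat p_n(s)$ rather than the true $p(s)$. I would first note $|\min_s V_s(\pi) - \min_s \hat V^{\mathrm m}_s(\pi)| \le \max_s |V_s(\pi) - \hat V^{\mathrm m}_s(\pi)|$, and then split $V_s - \hat V^{\mathrm m}_s = (V_s - \tilde V_s) + (\tilde V_s - \hat V^{\mathrm m}_s)$, where $\tilde V_s(\pi) = \tfrac1n\sum_i \tfrac{\mathbbm 1(S_i = s)}{p(s)}\psi^{\mathrm m}(\pi, \mathcal D_i)$ is the oracle estimator that uses the true $p(s)$ and is unbiased for $V_s(\pi)$. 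The first difference is handled exactly as in part (i) but on the reweighted class $\{\tfrac{\mathbbm 1(S = s)}{p(s)}\psi^{\mathrm m}(\pi, \cdot)\}$, whose functions are bounded by $C K_{\mathrm m}/\nu$; a union bound over $s \in \mathcal S$ produces the $|\mathcal S|$ inside the logarithm and the overall $1/\nu$ prefactor. For the second difference I would use the identity $\hat V^{\mathrm m}_s = \tfrac{p(s)}{\hat p_n(s)}\tilde V_s$, so that $|\tilde V_s - \hat V^{\mathrm m}_s| \le |\tfrac{p(s)}{\hat p_n(s)} - 1|\,|\tilde V_s|$, and control the frequency ratio by a Hoeffding/McDiarmid concentration of $\hat p_n(s)$ with a union bound over $s$: this is the source of $\ell(n,p_2) = 1 - \nu + \sqrt{\log(|\mathcal S|/p_2)/2}$, and, combined with the hypothesis $\tfrac1{\sqrt n}\ell(n,p_2) < \nu$ that guarantees $\hat p_n(s) \ge \nu - \tfrac1{\sqrt n}\ell(n,p_2) > 0$, it yields the ratio bound $\tfrac{\ell(n,p_2)/\sqrt n}{\nu - \ell(n,p_2)/\sqrt n}$ appearing in Eq.~(\ref{eq:bound_max_min}). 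Allocating $p_1$ to the empirical-process event and $p_2$ to the frequency-concentration event gives the stated $1 - p_1 - p_2$ confidence.

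Finally, part (ii) follows by combining (i) and (iii). Writing $V_\lambda - \hat V^{\mathrm m}_\lambda = (V - \hat V^{\mathrm m}) - \lambda\bigl(\max_{s,s'}|V_s - V_{s'}| - \max_{s,s'}|\hat V^{\mathrm m}_s - \hat V^{\mathrm m}_{s'}|\bigr)$ and using that $|\max_{s,s'}|a_s - a_{s'}| - \max_{s,s'}|b_s - b_{s'}|| \le 2\max_s|a_s - b_s|$, the discrepancy is at most $|V - \hat V^{\mathrm m}| + 2\lambda \max_s |V_s - \hat V^{\mathrm m}_s|$. Substituting the bound from part (i) (contributing weight $1$) and twice the bound from part (iii) (contributing weight $2/\nu$ at $\lambda = 1$) produces the combined prefactor $2 C K_{\mathrm m}\tfrac{2+\nu}{\nu} = 2 C K_{\mathrm m}(1 + \tfrac2\nu)$ and the reallocated logarithmic constant $4|\mathcal S|/p_1$ from union-bounding over the additional events. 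The main obstacle throughout is the plug-in denominator: keeping the ratio $p(s)/\hat p_n(s)$ under control uniformly over $s$ and $\pi$, ensuring it stays well-defined via the positivity hypothesis, and threading the two independent failure probabilities $p_1, p_2$ through the union bounds without degrading the $1/\sqrt n$ rate.
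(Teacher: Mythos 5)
Your proposal is correct and follows essentially the same route as the paper's proof: the same two master uniform deviations, the same oracle decomposition $V_s - \hat V^\mathrm{m}_s = (V_s - \tilde V_s) + (\tilde V_s - \hat V^\mathrm{m}_s)$ with McDiarmid control of $\hat p_n(s)$ and the positivity hypothesis keeping the ratio $\ell(n,p_2)/\sqrt{n}\,\big/\,(\nu - \ell(n,p_2)/\sqrt{n})$ well-defined, and the same reduction of the envy-free bound via $\big|\max_{s,s'}|a_s-a_{s'}| - \max_{s,s'}|b_s-b_{s'}|\big| \le 2\max_s|a_s-b_s|$ with the identical union-bound accounting. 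The only cosmetic difference is your explicit appeal to Talagrand's contraction lemma, where the paper instead exploits directly that $\psi^\mathrm{m}$ is affine in $\pi$ and bounded by $2CK_\mathrm{m}$ in its symmetrization step.
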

\vspace{-0.3cm}
Theorem~\ref{thrm:bounds} shows that, with sufficient sample size, the oracle policy objectives $\hat{V}^\mathrm{m}(\pi)$, $\hat{V}^\mathrm{m}_\lambda(\pi)$, and $\min_{s \in \mathcal{S}} V_s(\pi)$ are with high probability lower bounded than their empirical counterpart if the policy class $\Pi$ has as vanishing Rademacher complexity $R_n(\Pi)$. In this paper, we consider policy classes parametrized by neural networks, which  have a $\sqrt{n}$-vanishing Rademacher complexity $R_n(\Pi) = \mathcal{O}\left(n^{-1/2}\right)$. We also observe that the bounds for value fairness in Theorem~\ref{thrm:bounds} depend on the number $|\mathcal{S}|$ of possible values the sensitive attribute can attain, which is one of the main differences to the unrestricted bound.

\section{Experiments}
\label{sec:experiments}

We follow common procedures in causal machine learning \citep[e.g.,][]{Shalit.2017, Curth.2021} and perform experiments using both simulated and real-world data. Note that there are no suitable baselines for our setting apart from na{\"i}ve off-policy learning without fairness.

\subsection{Experiments using simulated data}
\vspace{-0.3cm}
\textbf{Experimental setup:}  
We generate a simulated dataset with $n=3000$ observations inspired by a credit-lending problem (see Appendix\ref{app:data_sim} for details). Throughout our experiments, we estimate the policies using the data from a training set (80\%) and evaluate the policies using the data from a test set (20\%) to compare the out-of-sample performance.
We perform all evaluations using the following performance metrics: (1)~We report the policy value $\hat{V}^\mathrm{m}(\pi)$. This thus corresponds to the objective function in off-policy learning that is maximized under the fairness constraints (thus: larger values are better). (2)~We additionally report the policy value by different sub-populations given by the conditional policy value $\hat{V}^\mathrm{m}_s(\pi)$ for $s=0$ and $s=1$. We provide the results for our framework across all three different policy scores, namely $\mathrm{m} \in \{\mathrm{DM}, \mathrm{IPW}, \mathrm{DR}\}$ from Eq.~\eqref{eq:pscore}, Eq.~\eqref{eq:pscore_ipw}, and Eq.~\eqref{eq:pscore_dr}, respectively. Of note, we cannot compare our framework against other methods since suitable baselines that can deal with fair off-policy learning over general policies are missing.

\begin{wraptable}{r}{0.7\textwidth}
\vspace{-0.4cm}
\caption{Results for simulated data.}
\label{tab:results_sim_af}
\centering
\footnotesize
\resizebox{0.7\textwidth}{!}{
\begin{tabular}{lcccc}
\toprule
{Approach} & \multicolumn{3}{c}{Policy value} & {Action fairness}\\
\cmidrule(lr){2-4}
& {Overall} & {$S = 0$} & {$S = 1$} \\
\midrule
\textsc{Baselines} & & & \\
\quad Optimal unrestricted policy &$1.24 \pm 0.03$& $0.74 \pm 0.03$ & $1.46 \pm 0.06$ & $2.42 \pm 0.20 $ \\
\quad Oracle action fairness &$1.03 \pm 0.02$& $0.01 \pm 0.07$ & $1.46 \pm 0.06$ & $0.00 \pm 0.00 $\\
\midrule
\textsc{Our \model (only action fairness)} & & & \\
\quad \model with $\mathrm{m} = \mathrm{DM}$ &$1.02 \pm 0.02$ & $0.01 \pm 0.06$ &$1.45 \pm 0.06$ & $0.21 \pm 0.05$ \\
\quad \model with $\mathrm{m} = \mathrm{IPW}$ &$ 1.01 \pm 0.03$ & $0.02 \pm 0.05$ &$1.43 \pm 0.07$ & $0.24 \pm 0.06$ \\
\quad \model with $\mathrm{m} = \mathrm{DR}$ &$1.01 \pm 0.03$ & $0.02 \pm 0.05$ &$1.43 \pm 0.07$ & $0.23 \pm 0.05$\\
\textsc{Our \model with envy-free fairness} & & & \\
\quad \model with $\mathrm{m} = \mathrm{DM}$ &$0.87 \pm 0.17$ &$0.61 \pm 0.07$ & $0.99 \pm 0.24$ & $0.38 \pm 0.22$\\
\quad \model with $\mathrm{m} = \mathrm{IPW}$ &$0.87 \pm 0.05$ &$0.32 \pm 0.17$ & $1.11 \pm 0.14$ & $0.79 \pm 0.30 $ \\
\quad \model with $\mathrm{m} = \mathrm{DR}$ &$0.86 \pm 0.06$ &$0.34 \pm 0.17$ & $1.09 \pm 0.15$ & $0.75 \pm 0.32$\\
\midrule
\textsc{Our \model with max-min fairness } & & & \\
\quad \model with $\mathrm{m} = \mathrm{DM}$ &$0.73 \pm 0.03$ &$0.73 \pm 0.03$ & $0.73 \pm 0.03$ & $0.00 \pm 0.00$\\
\quad \model with $\mathrm{m} = \mathrm{IPW}$ &$0.73 \pm 0.03$ &$0.73 \pm 0.03$ & $0.73 \pm 0.03$& $0.00 \pm 0.01$ \\
\quad \model with $\mathrm{m} = \mathrm{DR}$ &$0.73 \pm 0.03$ &$0.73 \pm 0.03$ & $0.73 \pm 0.03$ & $0.00 \pm 0.01$\\
\bottomrule
\multicolumn{4}{l}{Reported: mean $\pm$ standard deviation ($\times 10$) on test set over 5 runs.}
\end{tabular}}
\vspace{-0.4cm}
\end{wraptable}

\textbf{Results for action fairness:}
We now examine whether our framework is effective in learning policies that fulfill action fairness. (1)~We first report an optimal unrestricted policy that has access to the ground-truth outcome functions from the data-generating process and acts as the maximum achievable policy value for comparison. (2)~We further estimate an oracle policy that fulfills action fairness with access to the ground-truth outcome functions. It should be regarded as an upper bound for the policy value that can be achieved under action fairness. (3)~We compare our \model for action fairness, setting $\gamma = 0.5$. We report three different variants of our \model by varying the underlying policies scores $\mathrm{m}$, namely DM, IPW, and DR.

The results are in Table~\ref{tab:results_sim_af}. Besides policy values, we also report the performance in terms of action fairness, which we calculate via $\E[\pi(X_u, X_{s=1}, S=1) - \pi(X_{u}, X_{s=0}, S=0)]$. We make the following observations. First, the optimal unrestricted policy has the largest policy value but fails to achieve action fairness, as expected. Second, the policy value for the oracle policy with action fairness is lower, and, by definition, the action fairness achieves a score of zero. Third, we find that our \model is effective in achieving action fairness.
Fourth, we find that our \model attains a policy value that is close to the upper bound given by the oracle policy with action fairness, which corroborates the effectiveness of our framework. Finally, we find that our \model achieves a similar performance regardless of the underlying policy score (i.e., DM, IPW, and DR) and thus appears robust with respect to the choice of policy score.

\textbf{Results for value fairness:}
We further assess whether our framework is effective in learning policies that fulfill value fairness. Here, we report results from our framework with action fairness for three different fairness notions: (1)~no value fairness, (2)~envy-free fairness ($\lambda = 0.5$), and (3)~max-min fairness. We set $\gamma = 0.5$ and provide a sensitivity analysis for the parameter later. 

\begin{wrapfigure}{l}{0.6\textwidth}
  \centering
  \vspace{-0.5cm}
\begin{subfigure}{.3\textwidth}
  \centering
  \includegraphics[width=1\linewidth]{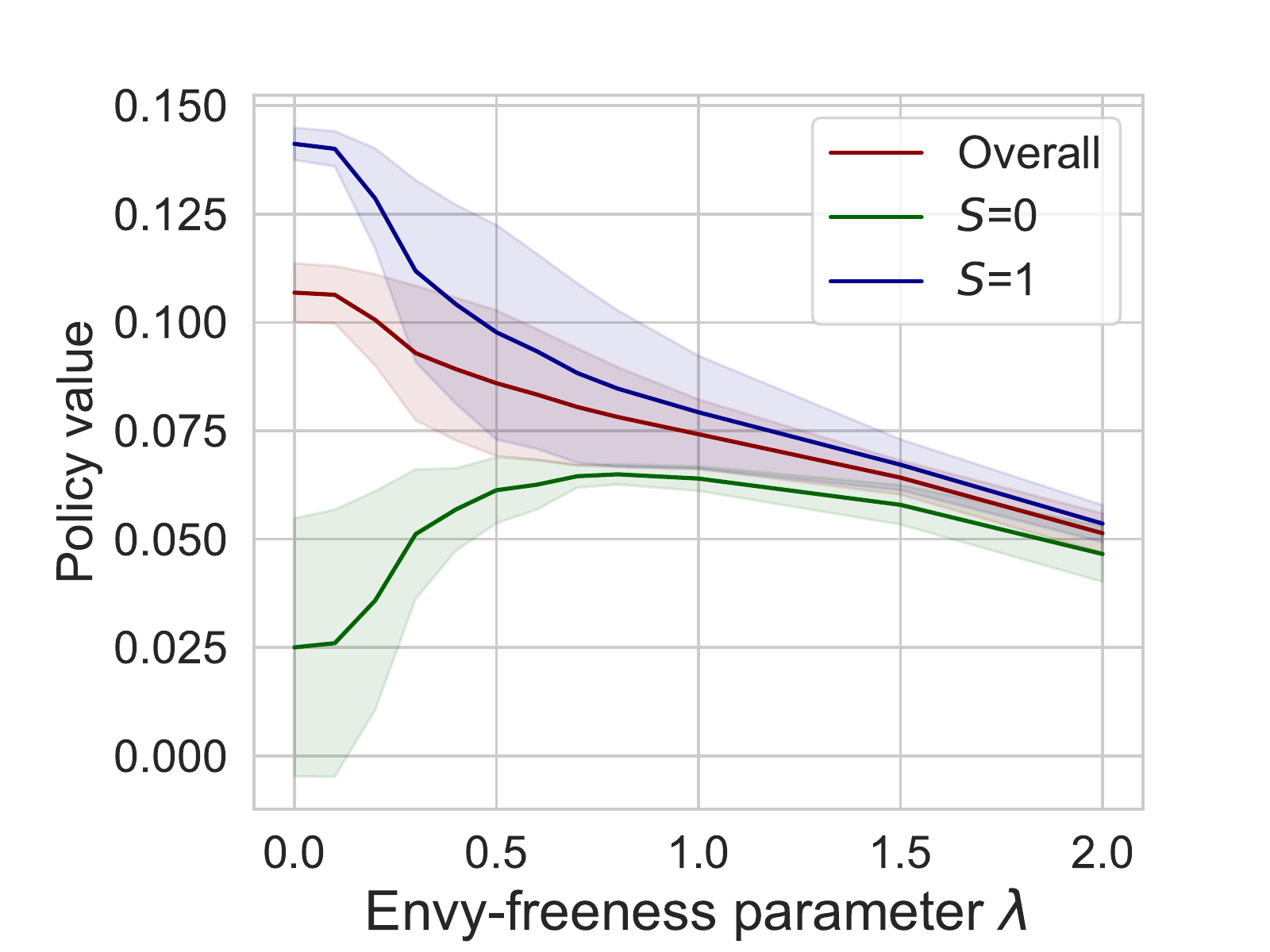}
\end{subfigure}%
\begin{subfigure}{.3\textwidth}
  \centering
  \includegraphics[width=1\linewidth]{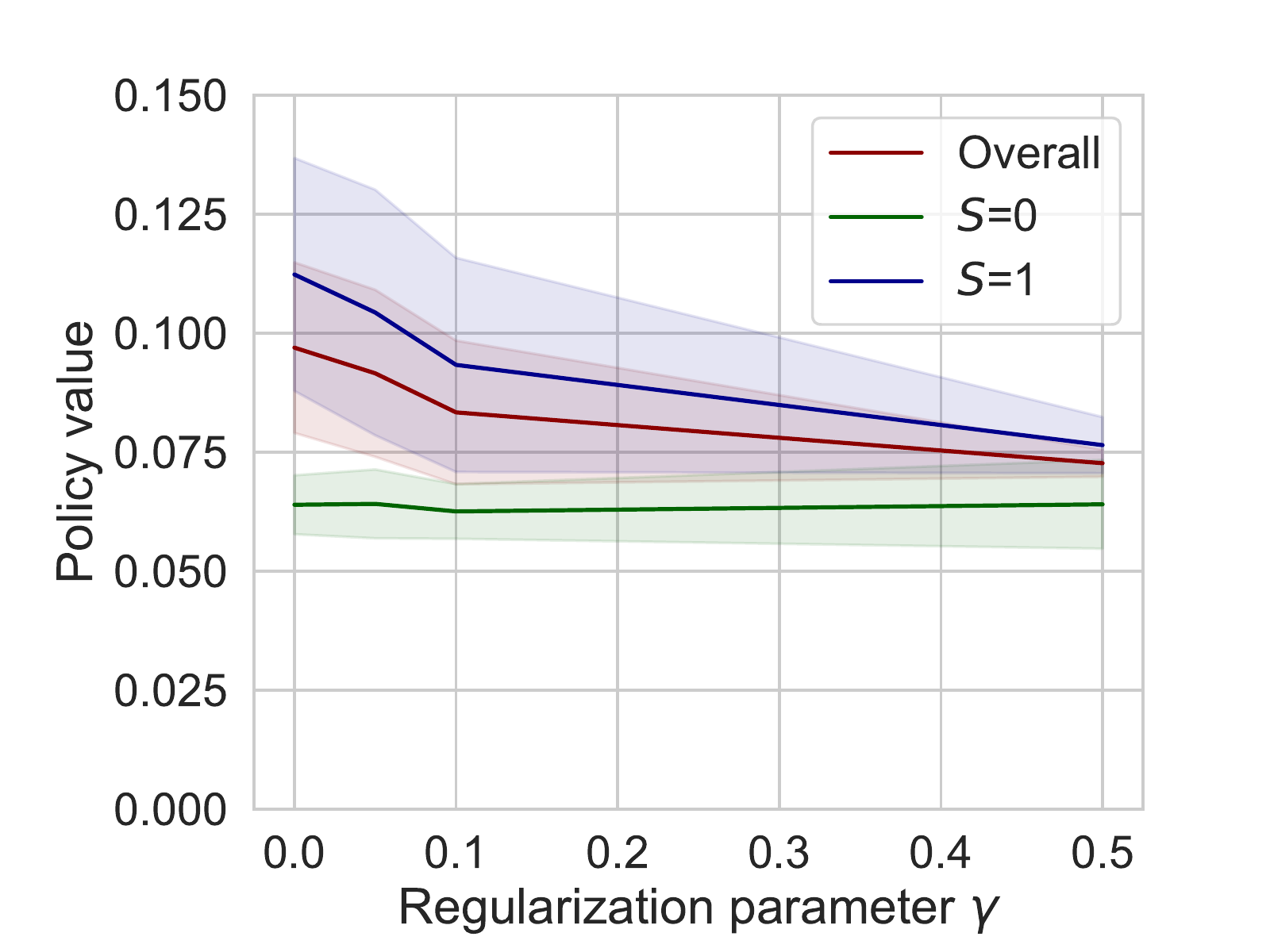}
\end{subfigure}
    \vspace{-0.4cm}
\caption{Sensitivity analysis for the envy-free parameter $\lambda$ (left) and the regularization parameter $\gamma$ (right).}
\label{fig:parameter}
    \vspace{-0.4cm}
\end{wrapfigure}
The results are in Table~\ref{tab:results_sim_af}. We arrive at the following conclusions. First, the optimal unrestricted policy has the largest overall policy value, as expected. Second, our \model for envy-free fairness achieves a smaller overall policy due to the fairness constraints. 
Third, our \model for max-min fairness is effective in achieving the desired fairness notion. It achieves a larger worst-case policy value compared to the optimal unrestricted policy and a lower difference between groups.
In summary, the experimental results confirm the effectiveness of our empirical framework in enforcing the proposed fairness notions.

We also examine the sensitivity to the envy-freeness parameter $\lambda$. We compare the policy value from our \model for different values of $\lambda \in [0, 2]$. The results are shown in Fig.~\ref{fig:parameter} (left). As expected, the policy value decreases and the difference between the policy values for the two sub-groups $S \in \{0,1\}$ becomes smaller for larger $\lambda$. Furthermore, the results remain robust for different choices of $\gamma$ (Fig.~\ref{fig:parameter}, right). 

\subsection{Experiments using real-world data}

\begin{wraptable}{r}{0.7\textwidth}
\vspace{-0.5cm}
\caption{Results for real-world data.}
\label{t:results_real_af}
\centering
\footnotesize
\resizebox{0.7\textwidth}{!}{
\begin{tabular}{lcccc}
\toprule
{Approach} & \multicolumn{3}{c}{Policy value} & {Action fairness}\\
\cmidrule(lr){2-4}
& {Overall} & {$S = \mathrm{male}$} & {$S = \mathrm{female}$} \\
\midrule
{Optimal unrestricted policy} & $0.137 \pm 0.005$ & $0.101 \pm 0.008$ &  $0.165 \pm 0.003$ & $0.129 \pm 0.007$ \\
{\model (only action fairness)} & $0.130 \pm 0.004$ & $0.093 \pm 0.006$ & $0.160 \pm 0.003$ & $0.015 \pm 0.001$ \\
\model with {envy-free fairness} & $0.130 \pm 0.004$ & $0.093 \pm 0.006$ & $0.160 \pm 0.003$ & $0.067 \pm 0.007$ \\
\model with {max-min fairness} & $0.131 \pm 0.008$ & $0.100 \pm 0.011$ & $0.157 \pm 0.008$& $0.057 \pm 0.010$ \\
\bottomrule
\multicolumn{5}{l}{Reported: mean $\pm$ standard deviation on test set over 5 random runs}
\end{tabular}}
\vspace{-0.3cm}
\end{wraptable}
\textbf{Experimental setup:}
We now demonstrate the applicability of our framework to real-world, medical data. 
We use medical data from the Oregon health insurance experiment \citep{Finkelstein.2012}. The Oregon health insurance experiment took place in 2008. As part of it, around 30,000 low-income, uninsured adults in Oregon were offered free health insurance through Medicaid. We use our framework to learn fair policies that assign Medicaid to minimize the total costs for medical care of an individual, while avoiding discrimination with respect to gender. 
Besides gender, we include five additional variables as possible confounders. Details are Appendix~\ref{app:data}. \model is based on $\gamma = 0.5$ (for action fairness) and the double robust method $\mathrm{m} = \mathrm{DR}$. For envy-free fairness, we set $\lambda = 0.3$. Here, we do not know the ground-truth outcomes for real-world data, and, hence, we estimate the policy values on the test data. To quantify action fairness, we report Spearman's rank correlation coefficient between the sensitive attribute (gender) and the policy predictions on the test data.

\begin{wrapfigure}{l}{0.6\textwidth}
\vspace{-0.5cm}
  \centering
\begin{subfigure}{0.3\textwidth}
  \centering
  \includegraphics[width=1\linewidth]{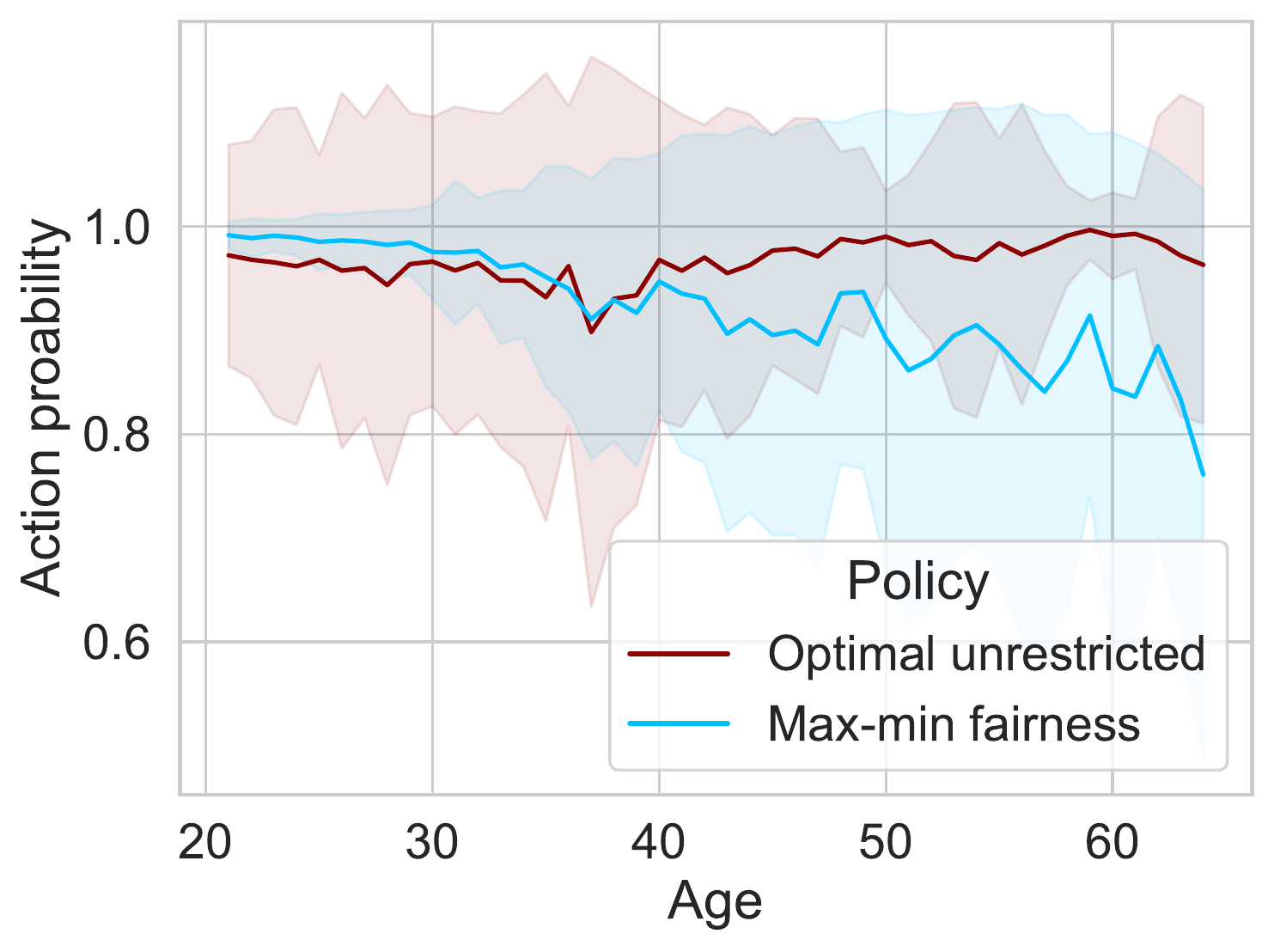}
\end{subfigure}%
\begin{subfigure}{0.3\textwidth}
  \centering
  \includegraphics[width=1\linewidth]{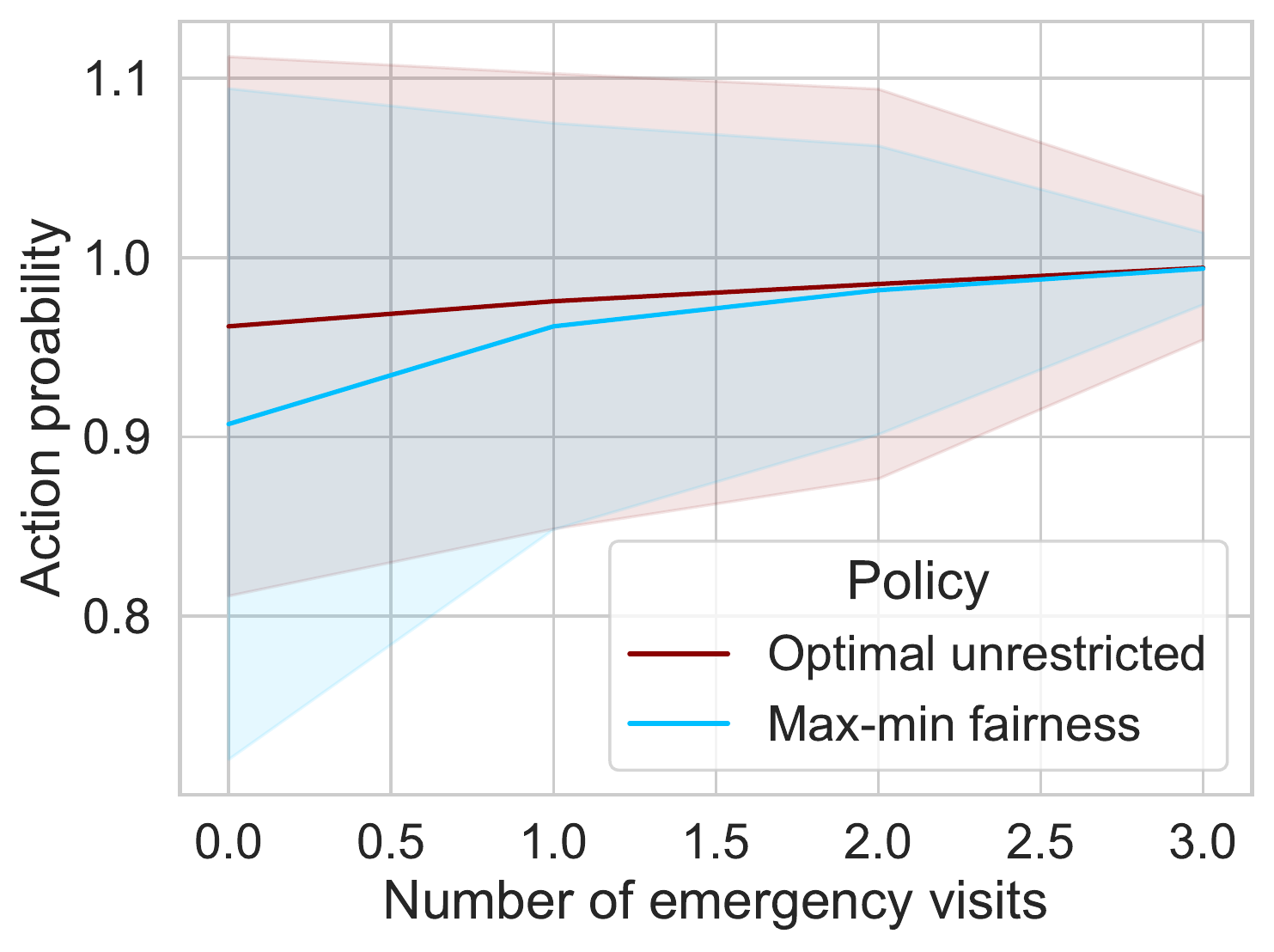}
\end{subfigure}
\vspace{-0.5cm}
\caption{Comparison of the estimated policies averaged over 20 random runs. Visualized are the policy predictions (i.e., the outputs of the respective policies).}
\label{fig:pred}
\vspace{-0.5cm}
\end{wrapfigure}

\textbf{Results for action and value fairness:}
($V^\textrm{m}_{\mathrm{female}}(\pi)$).   
The results are shown in Table~\ref{t:results_real_af}. Again, the optimal unrestricted policy has the largest empirical policy value but does not satisfy action fairness. In contrast to that, we find that our \model is again effective in achieving action fairness. This comes at the cost of a slightly lower overall policy value. As expected, both \model with envy-free fairness and with max-min fairness have a lower policy value than the optimal unrestricted policy. However, the max-min policy is effective in both increasing the worst-case policy value and decreasing the difference in policy values across both sub-populations. This demonstrates the effectiveness of our \model to obtain a policy with the best possible worst-case policy value on real-world data.

\textbf{Insights:}
We now examine the outputs of the respective policies. We the averaged outputs over (i)~age and (ii)~the number of previous emergency department visits of an individual (Fig.~\ref{fig:pred}). 
Both policies tend to recommend Medicaid for the majority of individuals. Furthermore, both policy outputs are lower for individuals with a smaller number of emergency department visits. This is reasonable as such individuals may be less at risk of accumulating medical debt compared to individuals with an extensive medical history. \model with max-min fairness outputs slightly lower predictions for older individuals and for individuals with no or few emergency visits. Hence, there seem to exist some male individuals with few emergency visits or higher age for which free health insurance has only little positive effect. 

\textbf{Conclusion:} In this paper, we proposed a novel framework for fair off-policy learning from observational data. Our framework is widely applicable to algorithmic decision-making in practice where fairness must be ensured. We refer to Appendix~\ref{app:discussion} for a detailed discussion.

\clearpage

\bibliography{bibliography.bib}
\bibliographystyle{iclr2024_conference}

\clearpage

\appendix

\section{Extended related work}\label{app:rw}

\subsection{Algorithmic fairness for machine learning predictions} 

Extensive work has developed algorithmic fairness for machine learning predictions, which refers to computational approaches that enforce certain constraints on predictions so that similarly-situated individuals also receive similar predictions. In the following, we provide a brief overview of the different concepts and fairness notions. We refer to \citet{Chouldechova.2020} and \citet{Mitchell.2021} for a detailed overview. We emphasize that the following fairness notions are developed for predictions and \emph{not} for off-policy learning.

A major literature branch deals with fairness notions that prevent systematic differences in predictions across different groups that are defined by some sensitive attributes (e.g., gender or race) \citep[e.g.,][]{Dwork.2012,Hardt.2016, Madras.2018,CorbettDavies.2023}.This can be achieved, for example, by enforcing independence between the sensitive attribute and the predictions (i.e., statistical parity) or ensuring a similar classification performance for the different sensitive groups (e.g., similar type-I/II error rates). Approaches for group-level fairness have been extended to specific settings, such as for data with unobserved sensitive attributes \citep{Kallus.2019b} and for censored training data \citep{Kallus.2018f}. Beyond group-level fairness, there are also notions at the individual level as well as notions that are based on a causal lens (called causal fairness); see \citet{Chouldechova.2020}. Note that, even though off-policy learning itself is a causal problem, our setting later is different from the literature on causal fairness: the standard literature on causal fairness uses causal theory (e.g., structural causal models) to define fairness notions \citep[e.g.,][]{Kilbertus.2017, Kusner.2017, Nabi.2018}, while we introduce fairness to a specific causal decision problem (off-policy learning).

A popular approach to achieve group-level fairness in practice is to remove the algorithmic bias incorporated in the training data by producing a new, fair representation of the data \citep[e.g.,][]{Zemel.2013, Locatello.2019}. For this, one typically uses neural networks that learn such a fair representation, and, then, one uses the fair representation as input to the actual prediction model. For instance, statistical parity can be achieved by producing a new representation of the data that is non-predictive of the sensitive attributes using probabilistic models \citep{Creager.2019} or adversarial learning methods \citep{Madras.2018}.

\subsection{Algorithmic fairness for utility-based decision models}

A different literature stream has developed fairness notions that account for the utility of individuals who are subject to decisions. Such fairness notions have been integrated into traditional decision problems and thus outside of machine learning. Examples are, for instance, resource allocation \citep{Bertsimas.2011,Bertsimas.2013, Rea.2021b} and pricing \citep{Kallus.2021c, Cohen.2022}. Here, a common fairness notion is envy-free fairness, which is fulfilled if an individual receives an allocation that has the same (or a higher) utility as the allocation of any other individual. Hence, decisions are envy-free if all players receive a share of resources that is equally good from their perspective \citep{Arnsperger.1994}. Another fairness notion is max-min fairness, which is grounded in Rawlsian justice and which seeks to maximize the minimum utility that a player can obtain \citep{Bertsimas.2011}. However, to the best of our knowledge, the aforementioned notions -- envy-free fairness and max-min fairness -- have only been used for traditional resource allocations and have not yet been adapted to off-policy learning from observational data, which is one of our contributions later.      

Prior literature also considered algorithmic fairness in specialized settings. Examples are ranking tasks such as from recommender systems \citep[e.g.,][]{Singh.2019c} or risk-averse approaches to bound worst-case outcomes \citep[e.g.,][]{Fang.2022}. Even others consider algorithmic fairness in reinforcement learning. Here, fair policies can be obtained by customizing the reward function \citep{Jabbari.2017, Jiang.2019, Yu.2022} or by optimizing social welfare functions \citep{Siddique.2020, Zimmer.2021}. However, these works focus on Markov decision processes (MDPs), whereas we focus on learning policies in non-sequential settings that are not restricted to MDPs. 

\clearpage

\section{Proofs}\label{app:proofs}

\subsection{Proof of Lemma~\ref{lem:maxmin_unfair}}

\begin{proof}
For each sensitive attribute $s \in \mathcal{S}$, we construct $\pi^\ast(\cdot, s) \in \argmax_{\pi \in \Pi_{\mathcal{X}}} V_s(\pi)$, where $\Pi_{\mathcal{X}} = \{\pi \colon \mathcal{X} \to [0,1] \mid \pi \textrm{ measurable} \} $. By definition, it holds that $V_s(\pi) \leq V_s(\pi^\ast)$ for any policy $\pi \in \Pi$ and, hence, $\inf_{s \in S} V_s(\pi) \leq \inf_{s \in S} V_s(\pi^\ast)$, which means that $\pi^\ast$ is a policy fulfilling max-min fairness. At the same time, due to $V_s(\pi) \leq V_s(\pi^\ast)$, it holds that
\begin{equation}
    V(\pi) = \int_{\mathcal{S}} V_s(\pi) \, \mathbb{P}(S = s) \,  \diff s \leq \int_{\mathcal{S}} V_s(\pi^\ast) \, \mathbb{P}(S = s) \diff s = V(\pi^\ast).
\end{equation}
Thus, the policy $\pi^\ast$ is an optimal unrestricted policy. 
\end{proof}

\subsection{Proof of Lemma~\ref{lem:envy_maxmin}}

\begin{proof}
We first show that $V_0(\pi^{\mathrm{mm}}) = V_{1}(\pi^{\mathrm{mm}})$, i.e., $\pi^{\mathrm{mm}}$ is envy-free with $\alpha=0$. Let us assume w.l.o.g. that $V_0(\pi^{\mathrm{mm}}) < V_{1}(\pi^{\mathrm{mm}})$. By our assumption, we can find an $\epsilon > 0$ such the policy $\pi^\prime$ defined by
\begin{equation}
   \pi^\prime(x) = \left\{
\begin{array}{ll}
\pi^{\mathrm{mm}}(x) + \epsilon \, \mathrm{sign}\{\mathrm{ITE}(x, 0)\} , \qquad & \quad \textrm{if } x \in V, \\
\pi^{\mathrm{mm}}(x) , & \quad \textrm{if } x \in \mathcal{X} \setminus V, \\
\end{array}
\right.
\end{equation}
satisfies $V_{1}(\pi^\prime) > V_0(\pi^{\mathrm{mm}})$. By construction of $\pi^\prime$ and our assumption, we yield
\begin{align}
V_{0}(\pi^\prime) &= \int_\mathcal{X} \, \pi^\prime(x) \, \mathrm{ITE}(x, 0) \, \mathbb{P}(x \mid S=0) + \mu_0(x, 0) \, \mathbb{P}(x \mid S=0) \, \diff x \\ &> \int_\mathcal{X} \pi^{\mathrm{mm}}(x) \, \mathrm{ITE}(x, 0) \, \mathbb{P}(x \mid S=0) + \mu_0(x, 0) \, \mathbb{P}(x \mid S=0) \,  \diff x = V_0(\pi^{\mathrm{mm}}). 
\end{align}
This implies
\begin{equation}
    \min\{V_{0}(\pi^\prime), V_{1}(\pi^\prime)\} > \min\{V_{0}(\pi^{\mathrm{mm}}), V_{1}(\pi^{\mathrm{mm}})\},
\end{equation}
which is a contradiction to the assumption that $\pi^{\mathrm{mm}}$ fulfills max-min fairness. Hence, $\pi^{\mathrm{mm}}$ fulfills envy-free fairness.

Now, let $\pi^{0}$ be an optimal policy that satisfies both action fairness and envy-free fairness. Let us further assume that $\pi^{0}$ does not fulfill max-min fairness. We then yield
\begin{equation}
    V(\pi^{0}) = \mathbb{P}(S = 0) \, V_0(\pi^{0}) + \mathbb{P}(S = 1) \, V_1(\pi^{0}) < \mathbb{P}(S = 0) \, V_0(\pi^{\mathrm{mm}}) + \mathbb{P}(S = 1) \, V_1(\pi^{\mathrm{mm}}) = V(\pi^{\mathrm{mm}}),
\end{equation}
which is a contradiction because $\pi^{\mathrm{mm}}$ fulfills envy-free fairness and $\pi^{0}$ is optimal. 
\end{proof}

\subsection{Proof of generalization bounds}

As a first step, we impose the following boundedness assumptions.

\begin{assumption}[Boundedness]\label{ass:boundedness}
We assume there exist constants $C, \xi, \nu > 0$ such that (i)~the outcomes are bounded with $|Y| \leq C$ almost surely, (ii)~the propensity score is bounded away from $0$ and $1$, i.e., $\mathbb{P}(\xi \leq \pi_b(X, S) \leq 1 - \xi) = 1$, and (iii)~$S$ has full support on $\mathcal{S}$, i.e., $p(s) = \mathbb{P}(S = s) \geq \nu$ for all $s \in \mathcal{S}$ and some $\nu > 0$.
\end{assumption}
\vspace{0.2cm}

Assumption~\ref{ass:boundedness} is standard in the off-policy learning literature \citep{Kallus.2018} and allows to derive theoretical guarantees. In particular, generalization bounds have been derived in settings with unobserved confounding \citep{Kallus.2018c} and distribution shifts \citep{Hatt.2022b}. In the following, we leverage techniques from \citep{Kallus.2018c}. Our bounds depend on the Rademacher complexity $R_n(\Pi) = \sum_{\epsilon \in \{-1, +1\}^n} \sup_{\pi \in \Pi} \left| \frac{1}{n} \sum_{i=1}^n \epsilon_i \pi(X_i)\right|$, which is a measure to characterize the complexity of the policy class $\Pi$.

In this section, we provide proof of our generalization bounds, namely Theorem~\ref{thrm:bounds}. In our proof, we later leverage ideas from Theorem~1 in \cite{Kallus.2018}; however, adapting these to our setting is not straightforward, and several additional arguments must be made. To this end, we begin with three auxiliary lemmas.

\vspace{0.3cm}
\begin{lemma}\label{lem:bounded_diff}
Let $T^\mathrm{m}(s, \mathcal{D}) = \sup_{\pi \in \Pi} |\frac{1}{n} \sum_{i=1}^n \frac{\mathbbm{1}(S_i = s)}{p(s)} \psi^\mathrm{m}(\pi, \mathcal{D}_i) - V_s(\pi)|$. Then, $T^\mathrm{m}(s, \cdot)$ satisfies the bounded difference inequality with $\frac{4C}{n p(s)} K_\mathrm{m}$, where $K_\mathrm{m}$ is a constant depending on $\mathrm{m} \in \{\mathrm{DM}, \mathrm{IPW}, \mathrm{DR}\}$.
\end{lemma}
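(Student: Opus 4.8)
The plan is to verify McDiarmid's bounded difference condition directly. I fix a coordinate $k \in \{1,\dots,n\}$, replace the $k$-th observation $\mathcal{D}_k = (X_k, S_k, A_k, Y_k)$ by an arbitrary alternative $\mathcal{D}_k' = (X_k', S_k', A_k', Y_k')$ while holding the other $n-1$ observations fixed, and bound the resulting change in $T^\mathrm{m}(s, \cdot)$. The first reduction is the standard inequality $|\sup_\pi f(\pi) - \sup_\pi g(\pi)| \le \sup_\pi |f(\pi) - g(\pi)|$, which pushes the supremum outside and lets me compare the two empirical objectives pointwise in $\pi$. The key observation is that the population term $V_s(\pi) = \E[\psi^\mathrm{m}(\pi, \mathcal{D}) \mid S=s]$ does not depend on the sample, so it cancels, and only the $k$-th summand of $\frac{1}{n}\sum_i \frac{\mathbbm{1}(S_i=s)}{p(s)} \psi^\mathrm{m}(\pi, \mathcal{D}_i)$ is affected by the swap.

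Hence the change is at most $\sup_\pi \frac{1}{n p(s)} \, |\mathbbm{1}(S_k = s)\,\psi^\mathrm{m}(\pi, \mathcal{D}_k) - \mathbbm{1}(S_k' = s)\,\psi^\mathrm{m}(\pi, \mathcal{D}_k')|$. I then bound the old and new summands separately by the triangle inequality, using $\mathbbm{1}(\cdot) \le 1$, so everything reduces to a uniform bound on $|\psi^\mathrm{m}(\pi, \mathcal{D})|$ over $\pi \in \Pi$ and over the data; this is where the method-dependent constant $K_\mathrm{m}$ enters. Under Assumption~\ref{ass:boundedness}: for the direct method $|\psi^{\mathrm{DM}}| \le C$ since $|\mu_j| \le C$ and $\pi \in [0,1]$; for IPW the weight is at most $1/\xi$ because the denominator $A\pi_b + (1-A)(1-\pi_b)$ is bounded below by $\xi$, giving $|\psi^{\mathrm{IPW}}| \le C/\xi$; and for DR I combine the two, $|\psi^{\mathrm{DR}}| \le C + \frac{1}{\xi}|Y - \mu_A| \le C(1 + 2/\xi)$. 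Writing each bound in the unified form $|\psi^\mathrm{m}| \le 2 C K_\mathrm{m}$ defines $K_\mathrm{m}$, and adding the two triangle-inequality terms, each bounded by $\frac{2CK_\mathrm m}{n p(s)}$, yields exactly the claimed constant $\frac{4C}{n p(s)} K_\mathrm{m}$.

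The argument is essentially routine, and the only genuine care is bookkeeping. First, I must apply the supremum-difference inequality to the inner absolute value inside $T^\mathrm{m}$, which is legitimate because $\big||a| - |b|\big| \le |a-b|$. Second, I must track $K_\mathrm{m}$ consistently so the final factor matches $\tfrac{4C}{np(s)}K_\mathrm m$; note the factor $4$ comes from bounding the old and new $k$-th summands \emph{each} by $2CK_\mathrm m/(np(s))$ and summing them, rather than from any cancellation. The main obstacle, such as it is, is the DR case: there one must uniformly control both the plug-in part and the weighted residual $w\,(Y - \mu_A)$ with $w = \frac{A\pi + (1-A)(1-\pi)}{A\pi_b + (1-A)(1-\pi_b)}$, which requires the positivity bound $\xi \le \pi_b \le 1-\xi$ from Assumption~\ref{ass:boundedness}(ii) together with $|Y|, |\mu_A| \le C$. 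Once all three methods are folded into the single constant $K_\mathrm m$, the bounded difference property with constant $\frac{4C}{n p(s)} K_\mathrm{m}$ follows.
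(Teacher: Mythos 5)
Your proof is correct and follows essentially the same route as the paper's: swap one observation, pull the supremum out via $|\sup_\pi f - \sup_\pi g| \le \sup_\pi |f-g|$ together with the reverse triangle inequality (so that the population term $V_s(\pi)$ cancels), isolate the single changed summand, and bound $|\psi^{\mathrm{m}}|$ uniformly under the boundedness assumption, handling DM, IPW, and DR separately. The only differences are cosmetic: your DM bound $|\psi^{\mathrm{DM}}| \le C$ (convex combination of $\mu_0,\mu_1$) is slightly tighter than the paper's $|\mu_1|+|\mu_0| \le 2C$, so your constants $K_{\mathrm{m}}$ for DM and DR come out smaller than the paper's ($K_{\mathrm{DM}}=1$, $K_{\mathrm{DR}}=(\xi+1)/\xi$ there), which is harmless since the lemma only asserts existence of method-dependent constants in the form $\frac{4C}{np(s)}K_{\mathrm{m}}$.
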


\begin{proof}
It holds that
\begin{align}
    &\left|T^\mathrm{m}(s, \mathcal{D}) - T^\mathrm{m}(s, \mathcal{D}^\prime)\right| \\ 
    \overset{(1)}{\leq} & \sup_{\pi \in \Pi} \left| \left|\frac{1}{n} \sum_{i=1}^n \frac{\mathbbm{1}(S_i = s)}{p(s)} \psi^\mathrm{m}(\pi, \mathcal{D}_i) - V_s(\pi) \right| -  \left|\frac{1}{n} \sum_{i=1}^n \frac{\mathbbm{1}(S_i^\prime = s)}{p(s)} \psi^\mathrm{m}(\pi, \mathcal{D}_i^\prime) - V_s(\pi) \right|\right|\\
    \overset{(2)}{\leq} & \frac{1}{n p(s)} \sup_{\pi \in \Pi} \left|\sum_{i=1}^n \mathbbm{1}(S_i = s) \psi^\mathrm{m}(\pi, \mathcal{D}_i) - \mathbbm{1}(S_i^\prime = s) \psi^\mathrm{m}(\pi, \mathcal{D}_i^\prime) \right| \\
    = & \frac{1}{n p(s)} \sup_{\pi \in \Pi} \left( \left| \psi^\mathrm{m}(\pi, \mathcal{D}_j)\right| + \left| \psi^\mathrm{m}(\pi, \mathcal{D}_j^\prime)\right| \right),
\end{align}
where (1) follows from a property of the supremum and (2) follows from the reverse triangle inequality.
It remains to bound $\left|\psi^\mathrm{m}(\pi, \cdot)\right|$ for $\mathrm{m} \in \{\mathrm{DM}, \mathrm{IPW}, \mathrm{DR}\}$. For $\mathrm{m} = \mathrm{DM}$, we get
\begin{equation}
\left| \psi^{\mathrm{DM}}(\pi, \mathcal{D}_j)\right| \leq |\mu_1(X, S)| + |\mu_0(X, S)| \leq 2 C .
\end{equation}
For $\mathrm{m} = \mathrm{IPW}$, we get
\begin{equation}
\left| \psi^{\mathrm{IPW}}(\pi, \mathcal{D}_j)\right| \leq \frac{|Y|}{|A \pi_b(X, S) + (1 - A) (1 - \pi_b(X, S))|}\leq \frac{C}{\eta} .
\end{equation}
Finally, for $\mathrm{m} = \mathrm{DR}$, we get
\begin{equation}
\left| \psi^{\mathrm{DR}}(\pi, \mathcal{D}_j)\right| \leq \left| \psi^{\mathrm{DM}}(\pi, \mathcal{D}_j)\right| +  \frac{|Y - \mu_A(X, S)|}{|A \pi_b(X, S) + (1 - A) (1 - \pi_b(X, S))|}\leq 2C \left(\frac{\eta + 1}{\eta} \right).
\end{equation}
Therefore, we arrive at
\begin{equation}
    \left|T^\mathrm{m}(s, \mathcal{D}) - T^\mathrm{m}(s, \mathcal{D}^\prime)\right| \leq \frac{4C}{n p(s)} K_\mathrm{m},
\end{equation}
with $K_{\mathrm{DM}} = 1$, $K_{\mathrm{IPW}} = \frac{1}{2 \eta}$, and $K_{\mathrm{DR}} = \frac{\eta + 1}{\eta}$.
\end{proof}

\vspace{0.3cm}
\begin{lemma} \label{lem:t_bound}
With probability of at least $1 - p$, it holds that
\begin{equation}
T^\mathrm{m}(s, \mathcal{D}) \leq \frac{2 C K_\mathrm{m}}{p(s)} \left(R_n(\Pi) + \sqrt{\frac{8 \log\frac{2}{p}}{n}}\right).
\end{equation}
\end{lemma}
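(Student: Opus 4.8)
The plan is to control the uniform deviation $T^\mathrm{m}(s,\mathcal{D})$ by splitting it into a concentration part and an expectation part, writing $T^\mathrm{m}(s,\mathcal{D}) = \big(T^\mathrm{m}(s,\mathcal{D}) - \mathbb{E}[T^\mathrm{m}(s,\mathcal{D})]\big) + \mathbb{E}[T^\mathrm{m}(s,\mathcal{D})]$ and bounding each piece so that they combine into the two summands $\frac{2CK_\mathrm{m}}{p(s)} R_n(\Pi)$ and $\frac{2CK_\mathrm{m}}{p(s)}\sqrt{8\log(2/p)/n}$.

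For the concentration part I would invoke Lemma~\ref{lem:bounded_diff}, which already shows that $T^\mathrm{m}(s,\cdot)$ satisfies the bounded difference inequality with per-coordinate constant $\frac{4C}{np(s)}K_\mathrm{m}$. Applying McDiarmid's inequality with these $n$ constants and solving the resulting exponential tail for the stated confidence level then yields, with probability at least $1-p$, the inequality $T^\mathrm{m}(s,\mathcal{D}) \le \mathbb{E}[T^\mathrm{m}(s,\mathcal{D})] + \frac{2CK_\mathrm{m}}{p(s)}\sqrt{\frac{8\log(2/p)}{n}}$, which accounts for the deviation term of the claim.

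For the expectation part I would bound $\mathbb{E}[T^\mathrm{m}(s,\mathcal{D})]$ via the standard symmetrization-and-contraction route. A ghost-sample symmetrization introduces i.i.d.\ Rademacher signs $\epsilon_i$ and, after pulling the constant $1/p(s)$ out, gives $\mathbb{E}[T^\mathrm{m}(s,\mathcal{D})] \le \frac{2}{p(s)}\,\mathbb{E}\big[\sup_{\pi\in\Pi}\big|\frac{1}{n}\sum_{i=1}^n \epsilon_i\,\mathbbm{1}(S_i=s)\,\psi^\mathrm{m}(\pi,\mathcal{D}_i)\big|\big]$. Since $\psi^\mathrm{m}(\pi,\mathcal{D}_i)$ depends on $\pi$ only through $\pi(X_i,S_i)$ and is affine in that argument, the bounds on $|\psi^\mathrm{m}|$ computed in the proof of Lemma~\ref{lem:bounded_diff} show that $\pi\mapsto\psi^\mathrm{m}(\pi,\mathcal{D}_i)$ is Lipschitz with constant controlled by $2CK_\mathrm{m}$. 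A Ledoux--Talagrand contraction argument then peels off $\psi^\mathrm{m}$ and reduces the Rademacher average to $\mathbb{E}\big[\sup_{\pi}\big|\frac{1}{n}\sum_i \epsilon_i\,\mathbbm{1}(S_i=s)\,\pi(X_i,S_i)\big|\big]$, which is dominated by $R_n(\Pi)$ because the indicator only zeroes out summands. Combining this with the concentration step proves the lemma.

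The main obstacle is the contraction step together with the careful bookkeeping of constants. Two subtleties need attention: first, $\psi^\mathrm{m}$ is affine rather than pinned at the origin, so a contraction inequality assuming $\phi(0)=0$ must be applied either after centering the affine map by a reference policy or in a variant that tolerates a nonzero intercept; second, the factor $\mathbbm{1}(S_i=s)$ must be absorbed into the Lipschitz constant (it is a $\{0,1\}$-valued multiplier, so the per-coordinate Lipschitz constant is still $\le 2CK_\mathrm{m}$) while the remaining restricted Rademacher sum is bounded by the full $R_n(\Pi)$. Reconciling the factor of two from symmetrization, the factor from the contraction inequality, and the Lipschitz bound so that they collapse to the single prefactor $\frac{2CK_\mathrm{m}}{p(s)}$ is the part requiring the most care.
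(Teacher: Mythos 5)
Your decomposition and the first two steps (McDiarmid via Lemma~\ref{lem:bounded_diff}, then symmetrization for the mean) follow the same route as the paper, but there is a genuine gap at the end: your argument bounds $\E[T^\mathrm{m}(s,\mathcal{D})]$ by a multiple of the \emph{expected} Rademacher complexity $\E[R_n(\Pi)]$, whereas the lemma (with the paper's definition of $R_n(\Pi)$ as an average over sign vectors for the \emph{realized} sample $X_1,\dots,X_n$) asserts a bound in terms of the \emph{empirical}, data-dependent quantity $R_n(\Pi)$. Ghost-sample symmetrization integrates over the data as well as the signs, so what it delivers is $\E[T^\mathrm{m}(s,\mathcal{D})] \leq \frac{2CK_\mathrm{m}}{p(s)}\,\E[R_n(\Pi)]$; you cannot then simply replace the deterministic quantity $\E[R_n(\Pi)]$ by the random variable $R_n(\Pi)$. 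The paper closes this with a second concentration step: $R_n(\Pi)$ itself satisfies bounded differences with constant $2/n$, so McDiarmid gives $\E[R_n(\Pi)] \leq R_n(\Pi) + \sqrt{2\log(1/p_2)/n}$ with probability at least $1-p_2$, and a union bound with $p_1 = p_2 = p/2$ then yields the claim. This is also where the stated constant comes from: the bound is the sum of two deviation terms, $\sqrt{2\log(2/p)/n} + \sqrt{2\log(2/p)/n} = \sqrt{8\log(2/p)/n}$. Your proposal instead spends the entire $\sqrt{8\log(2/p)/n}$ allowance on the first McDiarmid step (which only requires $\sqrt{2\log(1/p)/n}$), leaving no probability budget and no slack for the conversion from $\E[R_n(\Pi)]$ to $R_n(\Pi)$; as written, your chain proves the lemma with $\E[R_n(\Pi)]$ in place of $R_n(\Pi)$, which is a different statement.

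A secondary remark: your concerns about the contraction step (nonzero intercept of the affine map, the $\{0,1\}$ multiplier, factor bookkeeping) are legitimate, but they are not what separates your attempt from the paper's proof --- the paper treats that passage just as tersely, asserting the step from the symmetrized $\psi^\mathrm{m}$-average to $\frac{2CK_\mathrm{m}}{p(s)}\E[R_n(\Pi)]$ using the scale bounds on $|\psi^\mathrm{m}|$ established in Lemma~\ref{lem:bounded_diff}. For the indicator multiplier specifically, there is an elementary argument avoiding contraction: conditionally on the data write $\mathbbm{1}(S_i=s) = (1+b_i)/2$ with $b_i \in \{-1,+1\}$ and use that $(\epsilon_i b_i)_i$ has the same law as $(\epsilon_i)_i$, which gives domination by the unrestricted Rademacher average without any extra factor. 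The essential missing ingredient in your proposal is the second McDiarmid application to $R_n(\Pi)$ together with the union bound.
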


\begin{proof}
Lemma~\ref{lem:bounded_diff} allows us to apply McDiarmid's inequality, resulting in
\begin{equation}
    \mathbb{P}\left(T^\mathrm{m}(s, \mathcal{D}) - \E\left[ T^\mathrm{m}(s, \mathcal{D})\right] \geq \epsilon  \right) \leq \exp\left(- \frac{n p(s)^2 \epsilon^2}{8C^2 K_\mathrm{m}^2} \right).
\end{equation}
Equivalently, with probability of at least $1 - p_1$, it holds that
\begin{equation}
T^\mathrm{m}(s, \mathcal{D}) \leq \E\left[ T^\mathrm{m}(s, \mathcal{D})\right] + \frac{2 C K_\mathrm{m}}{p(s)} \sqrt{\frac{2 \log\frac{1}{p_1}}{n}}.
\end{equation}
By a standard symmetrization argument, we yield
\begin{equation}
\E\left[ T^\mathrm{m}(s, \mathcal{D})\right] \leq \E\left[ \frac{1}{2^n} \sum_{\epsilon \in \{-1,1\}^n} \sup_{\pi \in \Pi} \left|\frac{1}{n} \sum_{i=1}^n \epsilon_i \frac{\mathbbm{1}(S_i = s)}{p(s)} \psi^\mathrm{m}((\pi, \mathcal{D}_i) \right|\right] \leq \frac{2C K_\mathrm{m}}{p(s)} \E\left[R_n(\Pi)\right].
\end{equation}
Here, the Rademacher complexity $R_n(\Pi)$ satisfies the bounded differences with $\frac{2}{n}$, and we thus obtain
\begin{equation}
    \mathbb{P}\left(R_n(\Pi) - \E\left[R_n(\Pi)\right] \leq - \epsilon \right) \leq \exp\left(- \frac{n \epsilon^2}{2}\right) .
\end{equation}
This implies that, with probability of at least $1 - p_2$, it holds that
\begin{equation}
\E\left[R_n(\Pi)\right] \leq  R_n(\Pi) + \sqrt{\frac{2 \log\frac{1}{p_2}}{n}}.
\end{equation}
By setting $p_1 = p_2 = \frac{p}{2}$ and applying the union bound, we yield
\begin{equation}
T^\mathrm{m}(s, \mathcal{D}) \leq \frac{2 C K_\mathrm{m}}{p(s)} \left(R_n(\Pi) + \sqrt{\frac{8 \log\frac{2}{p}}{n}}\right)
\end{equation}
with probability of at least $1-p$.
\end{proof}

\noindent
In the next step, we use Lemma~\ref{lem:t_bound} to derive a bound on the absolute estimation error $\left|\hat{V}^\mathrm{m}_s(\pi) - V_s(\pi)\right|$ that holds uniformly over all policies and sensitive attributes. This is stated in Lemma~\ref{lem:v_bound_uniform}.

\vspace{0.3cm}
\begin{lemma} \label{lem:v_bound_uniform}
Let $ \ell(n, p_2) = 1 - \nu + \sqrt{\frac{\log\frac{|\mathcal{S}|}{p_2}}{2}}$. Let us further assume that $\frac{\ell(n, p_2)}{\sqrt{n}} < \nu$. Then, with probability of at least $1-p_1 - p_2$, it holds that
\begin{equation}
    \sup_{\pi \in \Pi} \sup_{s \in \mathcal{S}} \left|\hat{V}^\mathrm{m}_s(\pi) - V_s(\pi)\right| \leq \frac{2 C K_\mathrm{m}}{\nu} \left( R_n(\Pi) + \sqrt{\frac{8 \log\frac{2 |\mathcal{S}|}{p_1}}{n}} + \frac{1}{\sqrt{n}} \left( \frac{\ell(n, p_2)}{\nu - \frac{1}{\sqrt{n}}\ell(n, p_2)} \right)\right).
\end{equation}
\end{lemma}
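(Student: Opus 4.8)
The plan is to deduce the claim from Lemma~\ref{lem:t_bound}, whose only shortcoming is that it controls the \emph{oracle-reweighted} average $\tilde V^\mathrm{m}_s(\pi) = \frac1n\sum_{i=1}^n\frac{\mathbbm{1}(S_i=s)}{p(s)}\psi^\mathrm{m}(\pi,\mathcal D_i)$ built from the \emph{true} class probability $p(s)$, whereas the estimator $\hat V^\mathrm{m}_s(\pi)$ divides by the empirical frequency $\hat p_n(s)$. First I would split the error through this oracle quantity,
\[
\hat V^\mathrm{m}_s(\pi) - V_s(\pi) = \underbrace{\bigl(\hat V^\mathrm{m}_s(\pi) - \tilde V^\mathrm{m}_s(\pi)\bigr)}_{\text{plug-in error}} + \underbrace{\bigl(\tilde V^\mathrm{m}_s(\pi) - V_s(\pi)\bigr)}_{\le\, T^\mathrm{m}(s,\mathcal D)},
\]
and treat the two pieces separately. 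The second piece is exactly what $T^\mathrm{m}(s,\mathcal D)$ bounds, while the first is the price of estimating $p(s)$.

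For the oracle piece I would invoke Lemma~\ref{lem:t_bound} simultaneously for every $s\in\mathcal S$, assigning confidence $p_1/|\mathcal S|$ to each and taking a union bound. Together with $p(s)\ge\nu$ this gives, with probability at least $1-p_1$ and uniformly in $(\pi,s)$, that $T^\mathrm{m}(s,\mathcal D)\le \frac{2CK_\mathrm{m}}{\nu}\bigl(R_n(\Pi)+\sqrt{8\log(2|\mathcal S|/p_1)/n}\bigr)$, which already accounts for the first two terms of the right-hand side and for the $|\mathcal S|$ inside the logarithm. For the plug-in piece I would use the identity $\hat V^\mathrm{m}_s(\pi)-\tilde V^\mathrm{m}_s(\pi) = \tilde V^\mathrm{m}_s(\pi)\,\frac{p(s)-\hat p_n(s)}{\hat p_n(s)}$, combined with the deterministic bound $|\tilde V^\mathrm{m}_s(\pi)|\le \frac{2CK_\mathrm{m}}{\nu}$ (which follows from $|\psi^\mathrm{m}|\le 2CK_\mathrm{m}$, established inside the proof of Lemma~\ref{lem:bounded_diff}, together with $\hat p_n(s)\le 1$ and $p(s)\ge\nu$). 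It then remains to control the multiplicative factor $\frac{|p(s)-\hat p_n(s)|}{\hat p_n(s)}$.

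The remaining ingredient is a uniform concentration of the empirical frequencies: via a concentration argument (the map $\mathcal D\mapsto\max_s|p(s)-\hat p_n(s)|$ has bounded differences $1/n$, so McDiarmid applies) combined with a union bound over the $|\mathcal S|$ classes, one obtains, with probability at least $1-p_2$, that $\max_s|p(s)-\hat p_n(s)|\le \frac1{\sqrt n}\ell(n,p_2)$, where the $\sqrt{\log(|\mathcal S|/p_2)/2}$ in $\ell(n,p_2)$ is the fluctuation term and the additive $1-\nu$ reflects the range $|\mathbbm{1}(S_i=s)-p(s)|\le 1-\nu$ (valid since $\nu\le p(s)\le 1-\nu$). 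This is where the hypothesis $\frac1{\sqrt n}\ell(n,p_2)<\nu$ becomes decisive: it forces $\hat p_n(s)\ge p(s)-\frac1{\sqrt n}\ell(n,p_2)\ge \nu-\frac1{\sqrt n}\ell(n,p_2)>0$ uniformly in $s$, so the random denominator stays bounded away from zero and $\frac{|p(s)-\hat p_n(s)|}{\hat p_n(s)}\le \frac{\frac1{\sqrt n}\ell(n,p_2)}{\nu-\frac1{\sqrt n}\ell(n,p_2)}$. Multiplying by $|\tilde V^\mathrm{m}_s(\pi)|\le \frac{2CK_\mathrm{m}}{\nu}$ yields exactly the third term of the bound. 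Intersecting the two events gives total failure probability at most $p_1+p_2$, and taking the supremum over $\pi$ and $s$ and collecting terms produces the stated inequality. The main obstacle is precisely this transfer across the plug-in $\hat p_n(s)\mapsto p(s)$: one must keep the estimated denominator uniformly positive, which is only possible under the smallness condition on $\ell(n,p_2)$ and is what generates the extra additive term $\frac1{\sqrt n}\,\ell(n,p_2)/(\nu-\frac1{\sqrt n}\ell(n,p_2))$; establishing the concentration $\max_s|p(s)-\hat p_n(s)|\le \frac1{\sqrt n}\ell(n,p_2)$ with the correct $\ell$ is the other delicate step, while the rest is bookkeeping of the confidence budget across $p_1$, $p_2$, and the $|\mathcal S|$ classes.
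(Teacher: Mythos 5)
Your proposal matches the paper's proof essentially step for step: the same decomposition through the oracle-reweighted quantity built from the true $p(s)$, the same union-bounded application of Lemma~\ref{lem:t_bound} with confidence $p_1/|\mathcal{S}|$ per class, the same McDiarmid-plus-union-bound concentration giving $\max_s|\hat{p}_n(s)-p(s)|\leq \ell(n,p_2)/\sqrt{n}$ and hence the positivity of the random denominator, and the same final intersection of events with failure probability $p_1+p_2$. The only cosmetic differences are that you write the plug-in error via the multiplicative identity $\hat{V}^\mathrm{m}_s(\pi)-\tilde{V}^\mathrm{m}_s(\pi)=\tilde{V}^\mathrm{m}_s(\pi)\,\frac{p(s)-\hat{p}_n(s)}{\hat{p}_n(s)}$ where the paper factors out $\bigl(\frac{1}{\hat{p}_n(s)}-\frac{1}{p(s)}\bigr)$, and you justify the $(1-\nu)/\sqrt{n}$ expectation term by the almost-sure range bound $|\mathbbm{1}(S_i=s)-p(s)|\leq 1-\nu$ (which gives the variance bound) where the paper uses Jensen plus the binomial variance $p(s)(1-p(s))$ directly; both yield identical constants, so this is the same proof.
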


\begin{proof}
We yield
\begin{align}
& \quad \sup_{\pi \in \Pi} \left|\hat{V}^\mathrm{m}_s(\pi) - V_s(\pi)\right| \\
    & = \sup_{\pi \in \Pi} \left|\frac{1}{n} \sum_{i=1}^n \frac{\mathbbm{1}(S_i = s)}{\hat{p}_n(s)} \psi^\mathrm{m}((\pi, \mathcal{D}_i) - V_s(\pi)\right| \\
    & \leq \left|\frac{1}{\hat{p}_n(s)} - \frac{1}{p(s)} \right| \sup_{\pi \in \Pi} \left| \sum_{i=1}^n \mathbbm{1}(S_i = s)\psi^\mathrm{m}((\pi, \mathcal{D}_i) \right| + \sup_{\pi \in \Pi} \left|\frac{1}{n} \sum_{i=1}^n \frac{\mathbbm{1}(S_i = s)}{p(s)} \psi^\mathrm{m}((\pi, \mathcal{D}_i) - V_s(\pi)\right| \\
    & \leq \frac{2 C K_\mathrm{m}}{p(s)} \frac{\left|\hat{p}_n(s) - p(s) \right|}{\hat{p}_n(s)} +  T^\mathrm{m}(s, \mathcal{D}).
\end{align}
The absolute difference $\left|\hat{p}_n(s) - p(s) \right|$ satisfies bounded differences with constant $\frac{1}{n}$ because
\begin{equation}
  \left|\left|\hat{p}_n(s) - p(s) \right| - \left|\hat{p}^\prime_n(s) - p(s) \right| \right|  \leq \left|\hat{p}_n(s) - \hat{p}^\prime_n(s) \right| \leq \frac{\left|\mathbbm{1}(S_j = s) - \mathbbm{1}(S^\prime_i = s) \right|}{n} \leq \frac{1}{n}.
\end{equation}
Hence, McDiamid's inequality implies 
\begin{equation}
\mathbb{P}\left(\left|\hat{p}_n(s) - p(s) \right| - \E\left[\left|\hat{p}_n(s) - p(s) \right| \right] \geq \epsilon \right) \leq \exp\left(-2n\epsilon^2\right).
\end{equation}
Thus, with probability at least $1-p_2$ it holds that
\begin{align}
\left|\hat{p}_n(s) - p(s) \right| &\leq \E\left[\left|\hat{p}_n(s) - p(s) \right| \right] + \sqrt{\frac{\log\frac{1}{p_2}}{2n}} \\
& \overset{(1)}{\leq} \frac{1}{n} \sqrt{\E\left[\left(n\hat{p}_n(s) - np(s) \right)^2 \right]} + \sqrt{\frac{\log\frac{1}{p_2}}{2n}} \\
& \overset{(2)}{=} \sqrt{\frac{p(s)\left(1 - p(s)\right)}{n}} + \sqrt{\frac{\log\frac{1}{p_2}}{2n}} = \ell(n, p_2, s) \\
& \leq \frac{1}{\sqrt{n}}\left(1 - \nu + \sqrt{\frac{\log\frac{1}{p_2}}{2}}\right),
\end{align}
where (1) follows by applying Jensen's inequality and (2) by noting that $n \hat{p}_n(s) \sim \mathrm{Binomial}(n, p(s))$ with expectation $n p(s)$ and standard deviation $\sqrt{n p(s) (1 - p(s))}$.

The above also implies that, with probability of at least $1-p_2$, we obtain
\begin{equation}
\hat{p}_n(s) \geq p(s) - \frac{1}{\sqrt{n}}\left(1 - \nu + \sqrt{\frac{\log\frac{1}{p_2}}{2}}\right) \geq \nu - \frac{1}{\sqrt{n}}\left(1 - \nu + \sqrt{\frac{\log\frac{1}{p_2}}{2}}\right) > 0,
\end{equation}
whenever $\frac{1}{\sqrt{n}}\left(1 - \nu + \sqrt{\frac{\log\frac{1}{p_2}}{2}}\right) < \nu$. Putting everything together via the union bound, we obtain with probability of at least $1 - p_1 - p_2$ that
\begin{equation}
\sup_{\pi \in \Pi} \left|\hat{V}^\mathrm{m}_s(\pi) - V_s(\pi)\right|
\leq \frac{2 C K_\mathrm{m}}{\nu} \left( R_n(\Pi) + \sqrt{\frac{8 \log\frac{2}{p_1}}{n}} + \frac{1}{\sqrt{n}} \left( \frac{1 - \nu + \sqrt{\frac{\log\frac{1}{p_2}}{2}}}{\nu - \frac{1}{\sqrt{n}}\left(1 - \nu + \sqrt{\frac{\log\frac{1}{p_2}}{2}}\right)} \right)\right).
\end{equation}
The result follows by applying the union bound over all $s\in \mathcal{S}$.
\end{proof}

In the following, we use Lemma~\ref{lem:v_bound_uniform} to prove the generalization bounds. Specifically, we provide the proofs for the envy-free generalization bound from Eq.~\eqref{eq:bound_envy_free}, the max-min generalization bound from Eq.~\eqref{eq:bound_max_min}, and the unrestricted generalization bound from Eq.~\eqref{eq:bound_uf}.

\textbf{Proof of Eq.~\eqref{eq:bound_envy_free}:}

\begin{proof}
It follows that
\begin{align}
& \quad \sup_{\pi \in \Pi} \sup_{s, s^\prime \in \mathcal{S}} \left|\hat{V}^\mathrm{m}(\pi) - \lambda \big|\hat{V}^\mathrm{m}_s(\pi) - \hat{V}^\mathrm{m}_{s^\prime}(\pi) \big|   - V(\pi) + \lambda \big|V_s(\pi) - V_{s^\prime}(\pi) \big| \right| \\
& \leq \sup_{\pi \in \Pi} \left|\hat{V}^\mathrm{m}(\pi) - V(\pi) \right| + \lambda \sup_{\pi \in \Pi} \sup_{s, s^\prime \in \mathcal{S}}\left|\big|\hat{V}^\mathrm{m}_s(\pi) - \hat{V}^\mathrm{m}_{s^\prime}(\pi) \big| - \big|V_s(\pi) - V_{s^\prime}(\pi) \big| \right| \\
& \leq \sup_{\pi \in \Pi} \left|\hat{V}^\mathrm{m}(\pi) - V(\pi) \right| + 2 \lambda \sup_{\pi \in \Pi} \sup_{s \in \mathcal{S}} \left|\hat{V}^\mathrm{m}_s(\pi) - V_s(\pi)\right|.
\end{align}

Hence, with probability of at least $1 - p_1 - p_2$, we yield
\begin{align}
& \quad \sup_{\pi \in \Pi} \sup_{s, s^\prime \in \mathcal{S}} \left|\hat{V}^\mathrm{m}(\pi) - \lambda \big|\hat{V}^\mathrm{m}_s(\pi) - \hat{V}^\mathrm{m}_{s^\prime}(\pi) \big|   - V(\pi) + \lambda \big|V_s(\pi) - V_{s^\prime}(\pi) \big| \right| \\
&\leq 2 C K_\mathrm{m} \frac{2+\nu}{\nu}\left( R_n(\Pi) +\sqrt{\frac{8 \log\frac{4 |\mathcal{S}|}{p_1}}{n}} + \frac{2}{(2+\nu) \sqrt{n}} \left( \frac{\ell(n, p_2)}{\nu - \frac{1}{\sqrt{n}}\ell(n, p_2)} \right)\right)
\end{align}
using Lemma~\ref{lem:v_bound_uniform}. The theorem follows from
\begin{equation}
    \hat{V}^\mathrm{m}_\lambda(\pi) \leq V_\lambda(\pi) + \sup_{\pi \in \Pi} \sup_{s, s^\prime \in \mathcal{S}} \left|\hat{V}^\mathrm{m}(\pi) - \lambda \big|\hat{V}^\mathrm{m}_s(\pi) - \hat{V}^\mathrm{m}_{s^\prime}(\pi) \big|   - V(\pi) + \lambda \left|V_s(\pi) - V_{s^\prime}(\pi) \right| \right|.
\end{equation}
\end{proof}

\textbf{Proof of Eq.~\eqref{eq:bound_max_min}:}

\begin{proof}
The triangle inequality implies that
\begin{equation}
    \hat{V}^\mathrm{m}_s(\pi) \leq V_s(\pi) + \sup_{\pi \in \Pi} \left|\hat{V}^\mathrm{m}_s(\pi) - V_s(\pi)\right|.
\end{equation}

Hence, Lemma~\ref{lem:v_bound_uniform} yields with probability of at least $1-p_1 - p_2$ for all $s \in \mathcal{S}$, $\pi \in \Pi$:
\begin{equation}
    V_s(\pi) \geq \hat{V}^\mathrm{m}_s(\pi) - \frac{2 C K_\mathrm{m}}{\nu} \left( R_n(\Pi) + \sqrt{\frac{8 \log\frac{2 |\mathcal{S}|}{p_1}}{n}} + \frac{1}{\sqrt{n}} \left( \frac{\ell(n, p_2)}{\nu - \frac{1}{\sqrt{n}}\ell(n, p_2)} \right)\right).
\end{equation}
The result follows by applying the minimum over $s$ on both sides.
\end{proof}

\textbf{Proof of Eq.~\eqref{eq:bound_uf}:}

\begin{proof}
It follows that
\begin{equation}
    \hat{V}^\mathrm{m}(\pi) \leq V(\pi) + \sup_{\pi \in \Pi} \left|\hat{V}^\mathrm{m}(\pi) - V(\pi)\right|.
\end{equation}
With the same proof as in Lemma~\ref{lem:v_bound_uniform}, we can show, with probability of at least $1-p$, that
\begin{equation}
\sup_{\pi \in \Pi} \left|\hat{V}^\mathrm{m}(\pi) - V(\pi) \right| \leq 2 C K_\mathrm{m} \left(R_n(\Pi) + \sqrt{\frac{8 \log\frac{2}{p}}{n}}\right).
\end{equation}
\end{proof}

\clearpage

\section{Toy example to differentiate fairness notions}
\label{app:toy_example}

In the following, we provide a toy example based on which we discuss the differences between the above fairness notions. For this, we consider algorithmic decision-making in credit lending where applications for student loans are evaluated. We consider two covariates for students, namely their gender and average grade (GPA) given by $\mathit{Gender} \in \{\textrm{Female}, \textrm{Male}\}$ and $\mathit{GPA} \in \{\textrm{Low}, \textrm{High}  \}$. We consider $\mathit{Gender}$ as the sensitive attribute $S$. The outcome $Y$ is the expected change in salary, that is, whether it increases ($=1$), remains the same ($=0$), or decreases ($=-1$) as a result of the study program. For the purpose of our toy example, we make further assumptions regarding the distribution of covariates and expected outcomes. To this end, Table~\ref{t:toy_ex1} reports the probability of observing an individual from each sub-population (column~3), the outcome when a student receives the loan ($\mu_1$), and the outcome when a student does not receive the loan ($\mu_0$). Then, the overall effect of the action (i.e., the student loan) is given by $\mu_1 - \mu_0$. As can be seen, the action of receiving a loan benefits males with a high GPA while it has a negative effect for all other sub-groups.  

In Table~\ref{t:toy_ex1}, we report the policy value under different decision policies. (Details for calculating the policy values in our toy example are in Appendix~\ref{app:toy_example}). First, we report the optimal unrestricted policy ($\pi^{\mathrm{u}}$). This policy gives student loans only to males with high GPA but not to any other student. The reason is that the sub-population of males with high GPA is the only one with a positive effect (i.e., $\mu_1 - \mu_0 = 1$). Second, we report an optimal policy under action fairness ($\pi^{\mathrm{af}}$). It chooses the same action for both males and females with high (low) GPA. Hence, the action taken by $\pi^{\mathrm{af}}$ does not depend on gender and thus fulfills action fairness. Third, envy-free fairness ($\pi^\alpha$) and max-min fairness ($\pi^{\mathrm{mm}}$) assign loans only to males with a high GPA. In particular, the max-min policy coincides with the optimal unrestricted policy, as implied by Lemma~\ref{lem:maxmin_unfair}.

We further consider policies for envy-free fairness and max-min fairness that are combined with action fairness, so that always both action fairness and value fairness are satisfied (columns 10 and 11). Here, the policies assign actions to males and females with high GPA in order to fulfill action fairness. In addition, both policies assign actions only to a fraction of the overall population. This is seen by the fact that the policy outputs are $\frac{\alpha+1}{3}$ and $\frac{1}{3}$, respectively, and thus below 1. We further note that some of the policies can coincide as stipulated in Lemma~\ref{lem:envy_maxmin}. For $\alpha = 0$, the policy combining action fairness and envy-free fairness is identical to the policy combing action fairness and max-min fairness. For $\alpha = 2$, the policy combining action fairness and envy-free fairness is identical to the policy for action fairness ($\pi^{\mathrm{af}}$).

\begin{table}[ht]
\caption{Toy example comparing the different fairness notions for off-policy learning.}
\centering
\label{t:toy_ex1}
\footnotesize
\resizebox{1\textwidth}{!}{
\begin{tabular}{P{0.8cm} P{0.8cm} P{1.5cm} P{1.3cm} P{1.3cm} P{0.6cm} P{0.6cm} P{0.6cm} P{0.6cm} P{1.3cm} P{1.3cm}}
\noalign{\smallskip} \toprule \noalign{\smallskip}
\multicolumn{3}{c}{\textbf{Data}}& \multicolumn{2}{c}{\textbf{Expected outcome }} & \multicolumn{4}{c}{\textbf{Policies}} & \multicolumn{2}{c}{\textbf{Combined policies}} \\
& & & & & & & & & \multicolumn{2}{c}{(with action fairness)} \\
\cmidrule(lr){1-3} \cmidrule(lr){4-5} \cmidrule(lr){6-9} \cmidrule(lr){10-11}
Gender & GPA & Probability & $\mu_1$ & $\mu_0$ & $\pi^{\mathrm{u}}$ & $\pi^{\mathrm{af}}$ & $\pi^{\mathrm{\alpha}}$ & $\pi^{\mathrm{mm}}$ &  $\pi^{\alpha}$ & $\pi^{\mathrm{mm}}$\\
\midrule
Female & Low& $0.1$ & $0$ & $1$ & $0$ & $0$ & $0$ & $0$ & $0$ & $0$ \\
Male& Low & $0.4$& $0$ & $1$ & $0$ & $0$ & $0$ & $0$ & $0$ & $0$ \\
Female& High & $0.1$ & $-1$& $1$ & $0$ & $1$ & $0$ &$0$ & $\frac{\alpha + 1}{3}$& $\frac{1}{3}$ \\
Male & High & $0.4$& $1$ & $0$ & $1$ & $1$ & $1$ &$1$ &$\frac{\alpha + 1}{3}$ &$\frac{1}{3}$\\
\bottomrule
\multicolumn{11}{l}{\tiny \emph{Legend}: $\pi^{\mathrm{u}}$: optimal unrestricted; $\pi^{\mathrm{af}}$: action fairness; $\pi^{\alpha}$: envy-free fairness; $\pi^{\mathrm{mm}}$: max-min fairness}
\end{tabular}}
\end{table}

\textbf{Derivations:} We denote the levels of gender with $\mathrm{F}$ (female) and $\mathrm{M}$ (male), and the levels of GPA with $\mathrm{L}$ (low) and $\mathrm{H}$ (high). We first calculate the conditional policy value $V_\mathrm{F}$ for females
\begin{align}
    V_\mathrm{F}(\pi) & = \pi(\mathrm{F}, \mathrm{L}) \mu_1(\mathrm{F}, \mathrm{L}) + (1 - \pi(\mathrm{F}, \mathrm{L})) \mu_0(\mathrm{F}, \mathrm{L}) + \pi(\mathrm{F}, \mathrm{H}) \mu_1(\mathrm{F}, \mathrm{H}) + (1 - \pi(\mathrm{F}, \mathrm{H})) \mu_0(\mathrm{F}, \mathrm{H}) \\
    & = (1 - \pi(\mathrm{F}, \mathrm{L})) - \pi(\mathrm{F}, \mathrm{H}) + (1 - \pi(\mathrm{F}, \mathrm{H})) \\
    &= 2 - \pi(\mathrm{F}, \mathrm{L}) - 2 \pi(\mathrm{F}, \mathrm{H})
\end{align}
and $V_\mathrm{M}$ for males
\begin{align}
    V_\mathrm{M}(\pi) & = \pi(\mathrm{M}, \mathrm{L}) \mu_1(\mathrm{M}, \mathrm{L}) + (1 - \pi(\mathrm{M}, \mathrm{L})) \mu_0(\mathrm{M}, \mathrm{L}) + \pi(\mathrm{M}, \mathrm{H}) \mu_1(\mathrm{M}, \mathrm{H}) + (1 - \pi(\mathrm{M}, \mathrm{H})) \mu_0(\mathrm{M}, \mathrm{H}) \\
    & = 1 - \pi(\mathrm{M}, \mathrm{L}) + \pi(\mathrm{M}, \mathrm{H}).
\end{align}
The overall policy value is
\begin{align}
    V(\pi) & = 0.2 V_\mathrm{F}(\pi) + 0.8 V_M(\pi) \\
    &= 1.2 + 0.8\pi(\mathrm{M}, \mathrm{H}) -0.8 \pi(\mathrm{M}, \mathrm{L}) - 0.2\pi(\mathrm{F}, \mathrm{L}) - 0.4 \pi(\mathrm{F}, \mathrm{H}).
\end{align}
Hence, the optimal unrestricted policy is $\pi^\mathrm{u}(\mathrm{M}, \mathrm{H}) = 1$, $\pi^\mathrm{u}(\mathrm{M}, \mathrm{L}) = 0$, $\pi^\mathrm{u}(\mathrm{F}, \mathrm{H}) = 0$, and $\pi^\mathrm{u}(\mathrm{F}, \mathrm{L}) = 0$. 

The difference of the conditional policy value is
\begin{align}
    \Delta(\pi) = |V_\mathrm{F}(\pi) - V_\mathrm{M}(\pi)| & = |1 - \pi(\mathrm{F}, \mathrm{L}) - 2 \pi(\mathrm{F}, \mathrm{H}) - \pi(\mathrm{M}, \mathrm{H}) + \pi(\mathrm{M}, \mathrm{L})|.
\end{align}
It holds that $\Delta(\pi^\mathrm{u}) = 0$ which implies that the policy $\pi^\alpha$ with $\alpha$-envy-free fairness coincides with $\pi^\mathrm{u}$.

For the optimal policy $\pi^\mathrm{af}$ with action fairness, the policy value simplifies to
\begin{align}
    V(\pi^\mathrm{af})  &= 1.2 + 0.8\pi^\mathrm{af}(\mathrm{H}) -0.8 \pi^\mathrm{af}(\mathrm{L}) - 0.2\pi^\mathrm{af}(\mathrm{L}) - 0.4 \pi^\mathrm{af}(\mathrm{H}) \\
    &= 1.2 + 0.4\pi^\mathrm{af}(\mathrm{H}) - \pi^\mathrm{af}(\mathrm{L}),
\end{align}
which means that $\pi^\mathrm{af}(\mathrm{L}) = 0$ and $\pi^\mathrm{af}(\mathrm{H}) = 1$.
For the policy $\pi^{\mathrm{af} + \alpha}$ with both action fairness and envy-free fairness, we obtain
\begin{align}
    \Delta(\pi^{\mathrm{af} + \alpha}) = |1 - 3 \pi^{\mathrm{af} + \alpha}(\mathrm{H})| \leq \alpha,
\end{align}
which yields $\pi^{\mathrm{af} + \alpha}(\mathrm{L}) = 0$ and $\pi^{\mathrm{af} + \alpha}(\mathrm{H}) = \frac{\alpha + 1}{3}$. Finally, the policy $\pi^\mathrm{mm}$ with max-min fairness maximizes
\begin{align}
    \min\{V_\mathrm{F}(\pi^\mathrm{mm}), V_\mathrm{M}(\pi^\mathrm{mm})\} = \min\{2 - \pi^\mathrm{mm}(\mathrm{L}) - 2\pi^\mathrm{mm}(\mathrm{H}), \, 1 - \pi^\mathrm{mm}(\mathrm{L}) + \pi^\mathrm{mm}(\mathrm{H}) \},
\end{align}
which implies $\pi^\mathrm{mm}(\mathrm{L}) = 0$ and $\pi^\mathrm{mm}(\mathrm{H}) = 1/3$.

\clearpage

\section{Discussion of the assumptions in Lemma~\ref{lem:envy_maxmin}}
\label{app:assumptions}

In this section, we provide additional details regarding the assumptions in Lemma~\ref{lem:envy_maxmin}. In essence, Lemma~\ref{lem:envy_maxmin} holds if the max-min solution $(\pi^{\mathrm{mm}}, s^\ast)$ outputs stochastic actions in an area $V$ of the covariate space where the policy value could be improved by choosing a deterministic action. In the toy example from the previous section, this is the case, and, hence, the max-min and the envy-free policy with $\alpha = 0$ coincide. In the following, we study a second toy example where $\pi^{\mathrm{mm}}$ does not coincide with any envy-free policy $\pi^{\mathrm{af} + \alpha}$.

\subsection{Toy example}

The same data from Table~\ref{t:toy_ex1} is shown in Table~\ref{t:toy_ex2} with different ITEs. Now, the action benefits all groups, but males have larger expected outcomes than females. Furthermore, old males receive a larger benefit from the action than all other groups. Hence, even though the action benefits all groups, the difference in policy values for males and females will increase by giving performing action on old male patients. The max-min policy $\pi^{\mathrm{mm}}$ simply recommends action to everyone, as it aims to maximize the policy value $V_{\textrm{Female}}(\pi^{\mathrm{mm}})$ for females (worst-case). In contrast, the $\pi^{\mathrm{af} + \alpha}$ restricts action on older patients in order to decrease the disparity of conditional policy values $V_{\textrm{Female}}(\pi^{\mathrm{af} + \alpha})$ and $V_{\textrm{Male}}(\pi^{\mathrm{af} + \alpha})$ (envy-free).

\begin{table}[ht]
\caption{Toy example comparing the different fairness notions for off-policy learning.}
\centering
\label{t:toy_ex2}
\footnotesize
\resizebox{1\textwidth}{!}{
\begin{tabular}{P{0.8cm} P{0.8cm} P{1.5cm} P{1.3cm} P{1.3cm} P{0.6cm} P{0.6cm} P{0.6cm} P{0.6cm} P{1.3cm} P{1.3cm}}
\noalign{\smallskip} \toprule \noalign{\smallskip}
\multicolumn{3}{c}{\textbf{Data}}& \multicolumn{2}{c}{\textbf{Expected outcome }} & \multicolumn{4}{c}{\textbf{Policies}} & \multicolumn{2}{c}{\textbf{Combined policies}} \\
& & & & & & & & & \multicolumn{2}{c}{(with action fairness)} \\
\cmidrule(lr){1-3} \cmidrule(lr){4-5} \cmidrule(lr){6-9} \cmidrule(lr){10-11}
Gender & GPA & Probability & $\mu_1$ & $\mu_0$ & $\pi^{\mathrm{u}}$ & $\pi^{\mathrm{af}}$ & $\pi^{\mathrm{\alpha}}$ & $\pi^{\mathrm{mm}}$ &  $\pi^{\alpha}$ & $\pi^{\mathrm{mm}}$\\
\midrule
Female & Low& $0.1$ & $0$ & $-1$ & $1$ & $1$ & $1$ & $1$ & $1$ & $1$ \\
Male& Low & $0.4$& $1$ & $0$ & $1$ & $1$ & $\frac{\alpha}{3}$ & $1$ & $1$ & $1$ \\
Female& High & $0.1$ & $0$& $-1$ & $1$ & $1$ & $1$ &$1$ & $\alpha - 2$& 1 \\
Male & High & $0.4$& $2$ & $0$ & $1$ & $1$ & $\frac{\alpha}{3}$ &$1$ &$\alpha - 2$ &1\\
\bottomrule
\multicolumn{11}{l}{\tiny \emph{Legend}: $\pi^{\mathrm{u}}$: optimal unrestricted; $\pi^{\mathrm{af}}$: action fairness; $\pi^{\alpha}$: envy-free fairness; $\pi^{\mathrm{mm}}$: max-min fairness}
\end{tabular}}
\end{table}

\subsection{Derivation of toy example}

We proceed as in the example from our main paper (see Appendix~\ref{app:toy_example} for details) and calculate the conditional policy values
\begin{align}
    V_\mathrm{F}(\pi) & = -(1 - \pi(\mathrm{F}, \mathrm{L})) - (1 - \pi(\mathrm{F}, \mathrm{H})) \\
    &=  \pi(\mathrm{F}, \mathrm{L}) + \pi(\mathrm{F}, \mathrm{H}) - 2
\end{align}
and
\begin{align}
    V_\mathrm{M}(\pi) & = \pi(\mathrm{M}, \mathrm{L})  + 2 \pi(\mathrm{M}, \mathrm{H}).
\end{align}
The overall policy value is
\begin{align}
    V(\pi) & = 0.2 V_\mathrm{F}(\pi) + 0.8 V_\mathrm{M}(\pi) \\
    &= 1.6\pi(\mathrm{M}, \mathrm{H}) +0.8 \pi(\mathrm{M}, \mathrm{L}) + 0.2\pi(\mathrm{F}, \mathrm{L}) +0.2 \pi(\mathrm{F}, \mathrm{H}) - 0.4
\end{align}

We immediately obtain the optimal unrestricted policy is $\pi^\mathrm{u} \equiv \pi^\mathrm{af} \equiv \pi^\mathrm{mm} \equiv 1$ because all policy terms are positive. To obtain policies $\pi^{\mathrm{af} + \alpha}$ and $\pi^{\alpha}$ with envy-free fairness, we write the constraints as
\begin{align}\label{eq:app:policy_contraint1}
    \Delta(\pi^\alpha) = |2 + \pi^\alpha(\mathrm{M}, \mathrm{L})  + 2 \pi^\alpha(\mathrm{M}, \mathrm{H}) - \pi^\alpha(\mathrm{F}, \mathrm{L}) - \pi^\alpha(\mathrm{F}, \mathrm{H})| \leq \alpha
\end{align}
and 
\begin{align}\label{eq:app:policy_contraint2}
    \Delta(\pi^{\mathrm{af} + \alpha}) = 2 + \pi^{\mathrm{af} + \alpha}(\mathrm{H}) \leq \alpha,
\end{align}
where $\pi^{\mathrm{af} + \alpha}$ again denotes the policy that fulfills both action fairness and envy-free fairness. Eq.~\eqref{eq:app:policy_contraint2} implies $\pi^{\mathrm{af} + \alpha}(\mathrm{H}) = \alpha - 2$ (for $\alpha \geq 2$) and $\pi^{\mathrm{af} + \alpha}(\mathrm{L}) = 1$. Eq.~\eqref{eq:app:policy_contraint1} yields a linear constrained optimization problem with solution $\pi^\alpha(\mathrm{F}, \mathrm{L}) = \pi^\alpha(\mathrm{F}, \mathrm{H}) = 1$ and $\pi^\alpha(\mathrm{M}, \mathrm{L}) = \pi^\alpha(\mathrm{M}, \mathrm{H}) = \alpha / 3$.

\clearpage

\section{Learning algorithm for \model}
\label{app:algorithm}

Algorithm~\ref{alg:fair-policy} provides the learning algorithm for \model. The algorithm consists of two consecutive steps, namely the fair representation learning step and the policy learning step. For the policy learning step, the specification of the policy loss is needed, namely one of no value, envy-free fairness, and max-min fairness: $\mathcal{L}_\pi^{\mathrm{m}}(\cdot) \in \{\hat{V}^\mathrm{m}(\cdot), \hat{V}^\mathrm{m}_\lambda(\cdot), and \min_{s \in S} \hat{V}^\mathrm{m}_s(\cdot)\}$.

\begin{algorithm}
    \caption{Learning algorithm for \model}
    \label{alg:fair-policy}
    \footnotesize
    \begin{algorithmic}
        \State {\bfseries Input:} hyperparameters of representation networks (number of epochs $n_{r}$, learning rate $\eta_r$, action fairness parameter $\gamma$), hyperparameters of policy network (number of epochs $n_{p}$, learning rate $\eta_p$, policy loss $\mathcal{L}_\pi^{\mathrm{m}}(\cdot)$) 
        \State Initialize $\theta_Y^{(0)}, \theta_\Phi^{(0)}, \theta_S^{(0)} \sim$ Kaiming-Uniform \Comment{Step 1. Fitting the representation networks}
        \For{$k = 1$ {\bfseries to} $n_{r}$ }
            \State Forward pass of the base representation, outcome prediction, and sensitive attribute networks with $\theta_Y^{(k-1)}, \theta_\Phi^{(k-1)}, \theta_S^{(k-1)}$
            \State $\theta_Y^{(k)} \gets \theta_Y^{(k-1)} - \eta_r \nabla_{\theta_Y} \big[ \mathcal{L}_Y(\theta_\Phi^{(k-1)}, \theta_Y^{(k-1)}) \big]$ 
            \State $\theta_\Phi^{(k)} \gets \theta_\Phi^{(k-1)} - \eta_r \nabla_{\theta_\Phi} \big[ \mathcal{L}_Y(\theta_\Phi^{(k-1)}, \theta_Y^{(k-1)}) + \gamma \mathcal{L_\mathrm{conf}}(\theta_\Phi^{(k-1)}, \theta_S^{(k-1)}) \big]$
            \State Forward pass of the base representation and sensitive attribute networks with $\theta_\Phi^{(k)}, \theta_S^{(k-1)}$
            \State $\theta_S^{(k)} \gets \theta_S^{(k-1)} - \eta_r \nabla_{\theta_S} \big[ \gamma \mathcal{L}_S(\theta_\Phi^{(k)}, \theta_S^{(k-1)}) \big]$
        \EndFor
        \State $\hat{\theta}_\Phi \gets \theta_\Phi^{(n_{r})}$

        \State Initialize $\theta^{(0)} \sim$ Kaiming-Uniform \Comment{Step 2. Fitting the policy network}
        \For{$k = 1$ {\bfseries to} $n_{p}$ } 
            \State Forward pass of the policy network with $\theta^{(k-1)}$
            \State $\theta^{(k)} \gets \theta^{(k-1)} - \eta_p \nabla_{\theta} \big[\mathcal{L}_\pi^\mathrm{m}(\pi_{\theta^{(k-1)}}(\hat{\Phi}(X)))\big]$
        \EndFor
    \end{algorithmic}
\end{algorithm}

\clearpage

\section{Hyperparameter tuning}\label{app:hyperparam}

We followed best practices in causal machine learning \citep[e.g.,][]{Bica.2021, Curth.2021} and performed extensive hyperparameter tuning for \model. We split the data into a training set (80\%) and a validation set (10\%). We then performed 30 random grid search iterations and chose the set of parameters that minimized the respective training loss on the validation set. In particular, the tuning procedure was the same for all baselines, which ensures that the performance differences reported in Section~\ref{sec:experiments} are  not due to larger flexibility but are due to the different methods themselves.

We performed hyperparameter tuning for all neural networks in \model, i.e., the different representation networks and the policy network. For the real-world data, we also used TARNet \citep{Shalit.2017} in order to estimate the nuisance parameters. We first performed hyperparameter tuning for TARNet and for the representation networks, before tuning the policy neural networks by using the input from the tuned neural networks. The tuning ranges for the hyperparameter are shown in Table~\ref{tab:hyper_sim} (simulated data) and Table~\ref{tab:hyper_real} (real-world data). 

\begin{table}[ht]
\caption{Hyperparameter tuning ranges (simulated data).}
\centering
\label{tab:hyper_sim}
\footnotesize
\begin{tabular}{lll}
\toprule
\textsc{Neural network} & \textsc{Hyperparameter} & \textsc{Tuning range} \\
\midrule
All neural networks&Dropout probability & $0$, $0.1$, $0.2$ \\
&Batch size &$32$, $64$, $128$  \\
& Epochs & 400 \\
\midrule
Representation networks & Learning rate & 0.0001, 0.0005, 0.001, 0.005\\
 & Hidden layer / representation size & 2, 5, 10\\
& Weight decay & 0, 0.001 \\
Policy network & Learning rate & 0.00005, 0.0001, 0.0005, 0.001 \\
 & Hidden layer size  & 5, 10, 15, 20 \\
 & Weight decay & 0\\
\bottomrule
\end{tabular}
\end{table}

\begin{table}[ht]
\caption{Hyperparameter tuning ranges (real-world data).}
\centering
\label{tab:hyper_real}
\footnotesize
\begin{tabular}{lll}
\toprule
\textsc{Neural network} & \textsc{Hyperparameter} & \textsc{Tuning range} \\
\midrule
All neural networks&Dropout probability & $0$, $0.1$, $0.2$, $0.3$ \\
&Batch size &$32$, $64$, $128$  \\
\midrule
TARNet & Learning rate & 0.0001, 0.0005, 0.001, 0.005\\
 & Hidden layer sizes & 5, 10, 20, 30\\
& Weight decay & 0 \\
& Epochs & 200 \\

Representation networks & Learning rate & 0.0001, 0.0005, 0.001, 0.005\\
 & Hidden layer / representation size & 2, 5, 10\\
& Epochs & 400 \\
& Weight decay & 0, 0.001 \\
Policy network & Learning rate & 0.00005, 0.0001, 0.0005, 0.001 \\
 & Hidden layer size  & 5, 10, 15, 20 \\
& Epochs & 300 \\
 & Weight decay & 0\\
\bottomrule
\end{tabular}
\end{table}

The tables include both the hyperparameter ranges shared across all neural networks and the network-specific hyperparameters. For reproducibility purposes, we report the selected hyperparameters as \emph{.yaml} files.\footnote{Codes are in the supplementary materials and at \url{https://anonymous.4open.science/r/FairPol-402C}.}

\clearpage

\section{Details regarding simulated data}
\label{app:data_sim}

Here, we provide details regarding our synthetic data generation. We consider a decision problem from credit lending where loans are approved based on covariates of the customers, yet where algorithmic decision-making must not discriminate by gender. To this end, we denote the sensitive attribute by $S \in \{0,1\}$ and generate data as follows. We simulate two covariates $X_\mathrm{u} \in \R$ and $X_s \in \R$ via
\begin{equation}
    S \sim \mathrm{Bernoulli}(p_s), \qquad  X_u \sim \mathcal{U}[-1, 1], \qquad \text{ and } \qquad X_s \mid S = s \sim \mathcal{U}[s - 1, s],
\end{equation}
where $\mathcal{U}[-1, 1]$ is the uniform distribution over the interval $[-1, 1]$. Thus, $X_u$ is independent of $S$, while $X_s$ is correlated with $S$. In practice, $X_u$ can be, e.g., a credit score (which gives the probability of repaying a loan yet which is independent of gender), while $X_s$ can be, e.g., income (which is often correlated with gender). We further generate actions (decisions on whether a loan was approved or not) via
\begin{equation}
  A \mid X_u = x_u, X_s = x_s, S = s ~\sim \mathrm{Bernoulli}(p) \quad \text{with} \quad p = \sigma(\sin(2 x_u) + \sin(2 X_s) + \sin(2 s)),
\end{equation}
where $\sigma(\cdot)$ denotes the sigmoid function. Finally, we generate outcomes
\begin{equation}
  Y = \mathbbm{1}\{A = 1\} \left(\mathbbm{1}\{X_u < 0.5\}\sin(4X_s - 2) + \mathbbm{1}\{X_u > 0.5\}(0.6 \, S - 0.3)\right) + \epsilon,
\end{equation}
where $\epsilon \sim \mathcal{N}(0, 0.1)$. In our example, the outcomes could correspond to the profit for the lending institution. We sample a dataset of $n=3000$ sample from the data generating process and split the data into a training set (80\%) and a test set (20\%).

\clearpage

\section{Details regarding real-world data}
\label{app:data}

In our experiment with real-world data, we use data from the Oregon health insurance experiment (OHIE)\footnote{The dataset is available here: https://www.nber.org/programs-projects/projects-and-centers/oregon-health-insurance-experiment} \citep{Finkelstein.2012}. The OHIE was conducted as a large-scale experiment among public health to assess the effect of health insurance on several outcomes such as health or economic status. In 2008, a lottery draw offered low-income, uninsured adults in Oregon participation in a Medicaid program, providing health insurance. Chosen individuals were offered free health insurance. After a period of 12 months, a survey was conducted to evaluate several outcomes of the participants.

In our analysis, the decision to sign up for the Medicaid program is the action $A$, and the overall out-of-pocket cost for medical care within the last 6 months is the outcome $Y$. The sensitive covariate $S$ we consider is gender. Furthermore, we include the following covariates $X$: age, the number of people the individual signed up with, the week the individual signed up, the number of emergency visits before the experiment, and language. We extract $n=24,646$ observations from the OHIE data and plot the histograms of all variables in Fig.~\ref{fig:descriptive}. 

We split the data randomly into a train (0.7\%), validation (0.1\%), and test set (0.2\%) and perform hyperparameter tuning using the validation set. The evaluation metrics are then computed using the test set.

\begin{figure}[H]
\centering
\includegraphics[width=.8\linewidth]{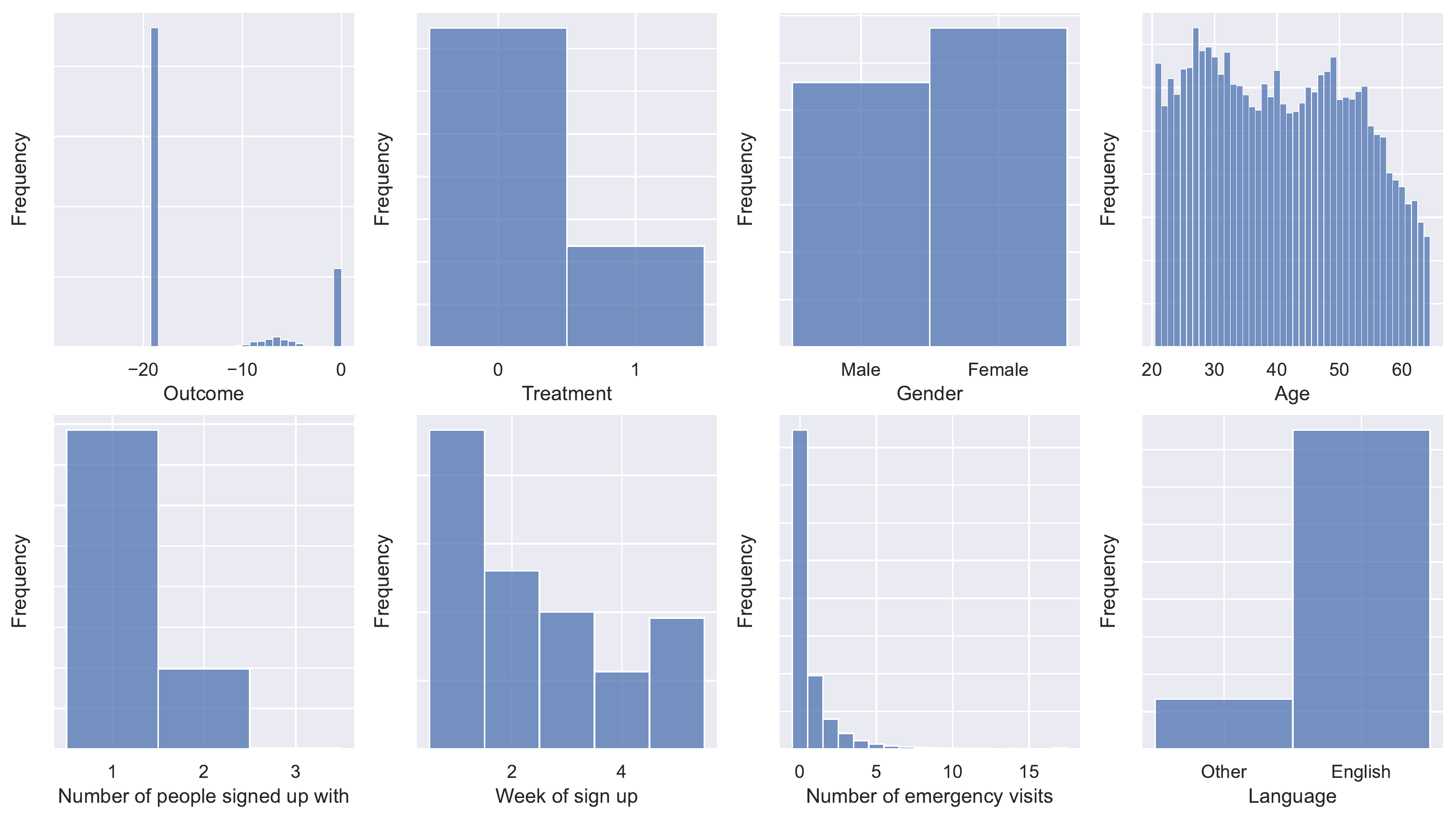}
\caption{Histograms of marginal distributions (real-world data).}
\label{fig:descriptive}
\end{figure}

\clearpage

\section{Discussion}\label{app:discussion}


Unfair decisions can have detrimental consequences for individuals because of which ethical and legal frameworks require that algorithmic decision-making must ensure fairness \citep{Barocas.2016, Kleinberg.2019}. Hence, potential applications benefiting from fairness for algorithmic decision-making are vast and include healthcare, lending, and criminal justice, among many others \citep{DeArteaga.2022}. For instance, in the U.S., the Equal Credit Opportunity Act mandates that lending decisions are fair for individuals of different gender, race, and other sensitive groups, while the Fair Housing Act enforces similar principles for housing rentals and purchases. As such algorithmic decision-making must avoid discrimination of individuals and thus generate decisions that are regarded as fair.

We addressed the problem of fairness in algorithmic decision-making by learning fair policies from observational data. Our framework has three main benefits that make it appealing for use in practice: (1)~Our framework is directly applicable even in settings where observational data ingrain historical discrimination. Despite such historical discrimination, our framework can still obtain a fair policy. This is relevant for practice as there is a growing awareness that many data sources are biased and that it is often challenging or infeasible to remove bias in historical data \citep{CorbettDavies.2023}. (2)~Our framework comes with a scalable machine learning instantiation based on a custom neural network (\model). Hence, practitioners can effectively generate fair policies from high-dimensional and non-linear observational data. (3)~Our framework is flexible in the sense that it supports different fairness notions. Practitioners can thus adapt our framework to the underlying fairness goals as well as the legal and ethical contexts and thus choose a suitable fairness notion. Together, our framework fulfills crucial fairness demands in many applications from practice (e.g., automated hiring, credit lending, and ad targeting).

Our work contributes to the literature in several ways. First, our work connects to off-policy learning \citep[e.g.,][]{Kallus.2018, Athey.2021}. While there is a growing body of literature that uses off-policy learning for managerial decision-making such as pricing and ad targeting \citep[e.g.,][]{Smith.2023, Yoganarasimhan.2022, Yang.2023}, we add by offering a new framework with fairness guarantees. In particular, our work fills an important gap in the literature in that we are able to learn fair policies from discriminatory observational data. Second, there is extensive literature on algorithmic fairness that focuses on machine learning predictions \citep[e.g.,][]{Hardt.2016, Kusner.2017, Nabi.2018}, whereas we contribute to algorithmic fairness for decision-making from observational data, specifically, off-policy learning. Third, fairness notions such as envy-free fairness or max-min fairness have been used in traditional, utility-based decision models such as those from resource allocation and pricing \citep[e.g.,][]{Bertsimas.2011,Kallus.2021c, Cohen.2022}. We build upon these fairness notions but integrate them into off-policy learning.

\textbf{Conclusion:} In this paper, we proposed a novel framework for fair off-policy learning from observational data. For this, we introduced fairness notions tailored to off-policy learning, then developed a flexible instantiation of our framework using machine learning (\model), and finally provided theoretical guarantees in form of generalization bounds. Our framework is widely applicable to algorithmic decision-making in practice where fairness must be ensured.

\end{document}